\newtheorem{assumption}{Assumption}
\newcommand{\ubar}[1]{\text{\b{$#1$}}}
\newcommand{\Ahat}{\hat{A}}
\newcommand\ahat{\hat{a}}
\newcommand\Bhat{\hat{B}}
\newcommand\bhat{\hat{b}}
\newcommand\Shat{\hat{S}}
\newcommand\Khat{\hat{K}}
\newcommand{\xhat}{\hat{x}}
\newcommand{\uhat}{\hat{u}}
\newcommand\Vhat{\hat{V}}
\newcommand\KCE{K^\mathsf{MM}}
\newcommand\PCE{P^\mathsf{MM}}
\newcommand\pce{p^\mathsf{MM}}
\newcommand\fCE{f^\mathsf{MM}}
\newcommand\VCE{V^\mathsf{MM}}
\newcommand\KILC{K^\mathsf{ILC}}
\newcommand\PILC{P^\mathsf{ILC}}
\newcommand\pilc{p^\mathsf{ILC}}
\newcommand\fILC{f^\mathsf{ILC}}
\newcommand{\Tr}{\mathrm{Tr}}
\newcommand\KOPT{K^\star}
\newcommand\POPT{P^\star}
\newcommand\popt{p^\star}
\newcommand\VOPT{V^\star}
\newcommand\AOPT{A^\star}
\newcommand\QOPT{Q^\star}
\newcommand\reals{\mathbb{R}}
\newcommand\epsA{\epsilon_A}
\newcommand\epsB{\epsilon_B}
\newcommand\epsP{\epsilon_P}
\newcommand\epsa{\epsilon_a}
\newcommand\epsb{\epsilon_b}
\newcommand\ILC{\textsf{ILC}}
\newcommand\MM{\textsf{MM}}
\title[On the Effectiveness of ILC]{On the Effectiveness of Iterative
  Learning Control}
\author{%
 \Name{Anirudh Vemula} \Email{vemula@cmu.edu}\\
 \addr Robotics Institute, Carnegie Mellon University
 \AND
 \Name{Wen Sun} \Email{ws455@cornell.edu}\\
 \addr Department of Computer Science, Cornell University
 \AND
 \Name{Maxim Likhachev} \Email{maxim@cs.cmu.edu}\\
 \addr Robotics Institute, Carnegie Mellon University
 \AND
 \Name{J. Andrew Bagnell} \Email{dbagnell@ri.cmu.edu}\\
 \addr Aurora Innovation
}
\begin{document}

\maketitle

\begin{abstract}%
 Iterative learning control (\ILC) is a powerful technique for high performance
  tracking in the presence of modeling errors for optimal control applications.
  There is extensive prior work showing its empirical effectiveness in
  applications such as chemical reactors, industrial robots and quadcopters.
  However, there is little prior theoretical work that explains the effectiveness of
  \ILC{} even in the presence of large modeling errors, where optimal
  control methods using the misspecified model (\MM{})
  often perform poorly. Our work presents such a theoretical study of the performance
  of both \ILC{} and \MM{} on Linear Quadratic Regulator (LQR) problems with unknown
  transition dynamics. We show that the suboptimality gap, as measured with
  respect to the optimal LQR controller, for \ILC{} is lower than that for \MM{} by
  higher order
  terms that become significant in the regime of high modeling
  errors.
  A key part of our analysis is the perturbation bounds for the discrete Ricatti
  equation in the finite horizon setting, where the solution is not a fixed
  point and requires tracking the error using recursive bounds. We back our
  theoretical findings with empirical experiments on a toy linear dynamical
  system with an approximate model, a nonlinear inverted pendulum system with
  misspecified mass, and a nonlinear planar quadrotor system in the presence of
  wind. Experiments show that \ILC{} outperforms \MM{} significantly, in terms of the cost of
  computed trajectories, when modeling errors are high.
\end{abstract}

\begin{keywords}%
  Iterative Learning Control, Ricatti Perturbation Bounds, Linear
  Quadratic Control
\end{keywords}

\section{Introduction}
\label{sec:intro}

Iterative learning control (\ILC{}) has seen widespread adoption in a range of control
applications where the dynamics of the system are subject to unknown
disturbances or in instances where model parameters are
misspecified~\cite{moore99}. While traditional feedback-based control methods
have been successful at tackling non-repetitive noise, \ILC{} has shown itself to
be effective at adjusting to repetitive disturbance through feedforward control
adjustment~\cite{arimoto84}. This was shown empirically in several robotic
applications such as manipulation~\cite{kuc91}, and quadcopter trajectory
tracking~\cite{schoellig12a, schoellig12b} among others.
% However there has been
% little prior theoretical work~\cite{atkeson88} aimed at understanding why ILC is effective and
% robust in the presence of large modeling errors. \jab{This sentence is
% weird given the CGA reference.}
Prior work~\cite{atkeson88} uses fixed point theory to analyze the
conditions for convergence of \ILC{} but does not present performance
bounds at convergence. Very recent
work~\cite{DBLP:conf/icml/AgarwalHMS21} presented a \ILC{} algorithm
that is robust to model mismatch and uncertainty. However, they
analyze the algorithm using planning regret, which measures regret
with respect to the best open loop plan in hindsight, and do not
study how the performance depends on modeling error.
Our work contributes to understanding the effectiveness of \ILC{}
%\jab{Don't think it fills, but contributes to our understanding}
by studying its worst case performance, as a function of modeling
error, in the linear quadratic
regulator (LQR) setting with unknown transition dynamics
and access to
an approximate model of the dynamics.

A simple approach to the LQR problem with an approximate
model of the dynamics is
% certainty equivalent (CE) control~\cite{astrom13}. It is
% based on the certainty equivalence principle which allows the synthesis of a
% controller by treating the approximate model as the true
% dynamics. \jab{This is not what I would call CE. I would call CE
% ignoring uncertainty.}
to do optimal control using the misspecified model (\MM{}.) The
resulting controller is similar to the certainty equivalent controller
obtained by performing optimal control on estimated parameters of the
regulator and ignoring the uncertainty of the estimates in adaptive
control~\cite{astrom13}.
Despite the
simplicity of \MM{}, it is challenging to quantify its suboptimality, with
respect to the optimal LQR controller, as a result of the
modeling errors in the approximate model.
% Very recent work from~\cite{mania19}
% has presented worst case cost suboptimality bounds for \MM{} in the
% infinite horizon LQR setting
% using fixed point discrete Ricatti perturbation bounds~\cite{konstantinov93}.
% In this work, we consider the more challenging finite horizon LQR
% setting for which the solution is not a fixed point.

%\jab{I think CE is the wrong phrase here generally.}

Our first contribution is proving worst case cost suboptimality bounds for
\MM{} in the finite horizon LQR setting in terms of the modeling error. This
requires us to depart from the fixed point analysis used in prior
work~\cite{mania19, konstantinov93}, as the
solution to the discrete Ricatti equation in the finite horizon is not a fixed
point. A key part of our analysis is establishing perturbation bounds by
carefully tracking the effect of modeling error through the horizon of the
control task. This allows us to quantify the worst case suboptimality gap of \MM{} in the
finite horizon LQR setting.

The second contribution is to utilize the same proof techniques as we used for
\MM{} to analyze the
suboptimality gap of \ILC{}. This allows us to explicitly compare the
worst case performance of
\ILC{} and \MM{} for LQR problems, and understand why \ILC{} works well in the regime of
large modeling errors when \MM{} often performs poorly.
Our analysis highlights that the suboptimality gap for \ILC{} is lower than that
for \MM{} by higher order terms that can become significant
when modeling errors are high. We also show that \ILC{} is capable of keeping the
system stable and cost from blowing up even in the presence of large modeling
errors, which \MM{} is incapable of. By interpreting the worst case
bounds, we identify several linear systems
with key characteristics that enable \ILC{} to be robust to large model
misspecifications, whereas \MM{} is unable to deal with model errors and
results in poor solutions.

The final contribution of this work is to present simple empirical experiments
involving optimal control tasks with linear and nonlinear dynamical systems that
back the theoretical findings from our analysis. The experiment results
reinforce our finding that in the regime of large modeling errors, \ILC{} performs
better than \MM{} and synthesizes control inputs that result in smaller suboptimality
gaps.

\section{Problem Setup}
\label{sec:problem-setup}

We consider the finite horizon linear quadratic regulator (LQR) setting with a
horizon $H$ and a fixed initial state $x_{0} \in \reals^{n}$. The dynamics of
the system are described by unknown matrices $A_{t} \in \reals^{n\times n}$ and
$B_{t} \in \reals^{n\times d}$ for $t=0, \cdots, H-1$ as follows:
%\begin{equation}
%  \label{eq:dynamics}
$x_{t+1} = A_{t}x_{t} + B_{t}u_{t}$
%\end{equation}
where $u_{t} \in \reals^{d}$ is the control input at time step $t$. Any sequence
of control inputs $(u_{0}, \cdots, u_{H-1})$ results in a state trajectory
$(x_{0}, \cdots, x_{H})$. The cost function is defined using matrices
$Q \in \reals^{n\times n}$, $Q_{f} \in \reals^{n\times n}$ and $R \in \reals^{d \times d}$ as follows:
\begin{equation}
  \label{eq:7}
  V_{0}(x_{0}) = \sum_{t=0}^{H-1} x_{t}^{T}Qx_{t} + u_{t}^{T}Ru_{t} + x_{H}^{T}Q_{f}x_{H}
\end{equation}

From optimal control literature~\cite{anderson07}, we know that the above cost is minimized by
a linear time-varying state-feedback controller $\KOPT = (\KOPT_{0}, \cdots, \KOPT_{H-1})$
with control inputs $u_{t} = \KOPT_{t}x_{t}$ satisfying:
\begin{align*}
  \KOPT_{t} &= -(R + B_{t}^{T}\POPT_{t+1}B_{t})^{-1}B_{t}^{T}\POPT_{t+1}A_{t} \\
  \POPT_{t} &= Q + A_{t}^{T}\POPT_{t+1}(I + B_{t}R^{-1}B_{t}^{T}\POPT_{t+1})^{-1}A_{t}
\end{align*}
where we initialize $\POPT_{H} = Q_{f}$ and the matrices $\POPT_{t}$ define the optimal
cost-to-go incurred using the optimal controller $\KOPT$ from time step $t$ as
$\VOPT_{t}(x_{t}) = x_{t}^{T}\POPT_{t}x_{t}$. For any controller $K$, we will
use the notation $M_{t}(K)$ to denote the matrix $A_{t} + B_{t}K_{t}$, and the
notation $L_{t}(K)$ to denote the product $\prod_{i=0}^{t}M_{i}(K)$. This is
useful for conciseness as we can observe that the state trajectory obtained
using $K$ can be expressed as $x_{t} = M_{t-1}(K)x_{t-1} = L_{t-1}(K)x_{0}$.

We are given access to an approximate model of the dynamics of the system
specified by matrices $\Ahat_{t} \in \reals^{n\times n}$ and
$\Bhat_{t} \in \reals^{n \times d}$ for $t = 0, \cdots, H-1$ such that there
exists some $\epsA, \epsB \geq 0$ (also referred to as the modeling error) satisfying
$||A_{t} - \Ahat_{t}|| \leq \epsA$ and $||B_{t} - \Bhat_{t}|| \leq \epsB$.
For the purposes of this paper, we use the notation $||\cdot||$ to refer to the
matrix norm induced by the L2 vector norm.
In this paper, we consider two control strategies: optimal control
using the misspecified model
(\MM{}) and iterative learning control (\ILC{}.)

\subsection{Optimal Control using Misspecified Model}
\label{sec:cert-equiv-contr}

Optimal control using misspecified model uses the
approximate model to synthesize a time-varying linear controller
$\KCE = (\KCE_{0}, \cdots, \KCE_{H-1})$ satisfying:
\begin{align*}
  \KCE_{t} &= -(R + \Bhat_{t}^{T}\PCE_{t+1}\Bhat_{t})^{-1}\Bhat_{t}^{T}\PCE_{t+1}\Ahat_{t} \\
  \PCE_{t} &= Q + \Ahat_{t}^{T}\PCE_{t+1}(I + \Bhat_{t} R^{-1}\Bhat_{t}^{T}\PCE_{t+1})^{-1}\Ahat_{t}
\end{align*}
where we initialize $\PCE_{H} = Q_{f}$ and the control inputs are defined as
$u_{t}^{\mathsf{CE}} = \KCE_{t}x_{t}$. One can observe that the controller
$\KCE$ results in suboptimal cost when executed in the system as it is
optimizing the cost under approximate
dynamics rather than the true dynamics of the system. Thus, the suboptimality
gap $\VCE_{0}(x_{0}) - \VOPT_{0}(x_{0})$ depends on the approximate dynamics
$\Ahat_{t}, \Bhat_{t}$, and how well they approximate the true dynamics.

\subsection{Iterative Learning Control}
\label{sec:iter-learn-contr-1}
%\jab{Might be worth tightening some the text up earlier so you can
%spare a line or two on the philosophy of ILC. I.e. why do it?}

Iterative learning control~\cite{arimoto84, moore99} is a framework
that is used to efficiently calculate the feedforward input signal
adjustment by using information from previous trials to improve the
performance in a small number of iterations. An example of an \ILC{}
algorithm is shown in Algorithm~\ref{alg:ilc}.
\ILC{} assumes a rollout access to
the system, i.e. we are
allowed to conduct full rollouts of horizon $H$ in the system to evaluate the
cost and obtain the trajectory under true dynamics
(Line~\ref{line:rollout}). Note that this access is
only restricted to rollouts, and the true dynamics $A_{t}, B_{t}$ are unknown.
\ILC{} can be understood as an iterative shooting method where we
synthesize control inputs by 
always evaluating in the true system while computing updates to the controls
using the approximate model~\cite{abbeel06, DBLP:conf/icml/AgarwalHMS21}. In
Algorithm~\ref{alg:ilc}, this is achieved by linearizing the
dynamics and quadraticizing the cost around the observed
trajectory (Line~\ref{line:lqr}) resulting in an LQR problem with the objective:
\begin{equation}
  \label{eq:5}
  J(\Delta x, \Delta u) = \sum_{t=0}^{H-1} (2x_t + \Delta x_t)^TQ\Delta x_t +
    (2u_t + \Delta u_t)^TR\Delta u_t + (2x_H + \Delta x_H)^TQ_f\Delta x_H 
  \end{equation}
  where $x_{0:H}$ is the observed trajectory on the true system when
  executing controls $u_{0:H-1}$, and for any $t=0, \cdots, H-1$ we
  have $\Ahat_t\Delta x_t + \Bhat_t\Delta u_t = \Delta x_{t+1}$.
  \begin{algorithm}[t]
    \small
  \caption{\ILC{} Algorithm for Linear Dynamical System with
    Approximate Model}
  \label{alg:ilc}
  \begin{algorithmic}[1]
    \STATE {\bfseries Input:} Approximate model $\Ahat_t,
    \Bhat_t$, Initial state $x_0$, Step size $\alpha$, cost matrix
    $Q, R, Q_f$
    \STATE Initialize a control sequence $u_{0:H-1}$ using approximate
    model
    \WHILE {not converged}
    \STATE Rollout $u_{0:H-1}$ on the true system to get trajectory
    $x_{0:H}$\label{line:rollout}
    \STATE Compute LQR solution $\arg\min_{\Delta x, \Delta u} J(\Delta x, \Delta u)$ subject to $\Ahat_t\Delta x_t + \Bhat_t\Delta u_t = \Delta x_{t+1}$ \label{line:lqr}
    \STATE Update $u_{0:H-1} = u_{0:H-1} + \alpha \Delta u_{0:H-1}$
    \ENDWHILE
  \end{algorithmic}
\end{algorithm}

At convergence in Algorithm~\ref{alg:ilc}, we have $\Delta u = 0$,
i.e. the LQR problem in line~\ref{line:lqr} returns the solution where
$\Delta u = 0$. The solution to the LQR problem can be derived in
closed form using dynamic programming, and for any $t \in \{0, \cdots,
H-1\}$ is given by
\begin{align*}
  \Delta u_t = -(R + \Bhat_t^T\PILC_{t+1}\Bhat_t)^{-1}(Ru_t + \Bhat_t^T\PILC_{t+1}x_{t+1})
\end{align*}
where $\PILC_{t+1}$ captures the cost-to-go from time step $t+1$ with
$\PILC_H = Q_f$. To obtain $\Delta u_t = 0$ for any $t \in \{0, \cdots, H-1\}$,
it is necessary for the following condition to hold,
% At convergence, we have $\Delta u = 0$ which is achieved when the
% gradient $\nabla_u (\sum_{t=0}^{H-1}x_t^TQx_t + u_t^TRu_t +
% x_H^TQ_fx_H)$ is zero.
%\todo[inline]{Add notes here}
% To characterize the control inputs \ILC{} converges to, let 
% us consider the gradient of the cost-to-go at time step $t$,
% $\nabla_{u}(x_{t}^{T}Qx_{t} + u_{t}^{T}Ru_{t} +
% x_{t+1}^{T}\PILC_{t+1}x_{t+1})$ where $\PILC_{t+1}$ captures the
% cost-to-go from time step $t+1$,
% and set it to zero:
\begin{align*}
  &Ru_{t} + \Bhat^{T}_{t}\PILC_{t+1}x_{t+1} = 0 \\
  &Ru_{t} + \Bhat^{T}_{t}\PILC_{t+1}(A_{t}x_{t} + B_{t}u_{t}) = 0 \\
  &u_{t} = -(R + \Bhat_{t}^{T}\PILC_{t+1}B_{t})^{-1}\Bhat_{t}^{T}\PILC_{t+1}A_{t}x_{t}
\end{align*}
where we use the rollout trajectory to obtain
$x_{t+1} = A_{t}x_{t} + B_{t}u_{t}$. It is important to note that converging to
these control inputs require carefully chosing appropriate step sizes $\alpha$ at each
iteration in the \ILC{} Algorithm~\ref{alg:ilc}. Thus, we can see that
the control inputs $u_{0:H-1}$
\ILC{} converges to can be described using a time-varying state-feedback linear
controller $\KILC$ defined as:
\begin{align*}
  \KILC_{t} &= -(R + \Bhat_{t}^{T}\PILC_{t+1}B_{t})^{-1}\Bhat_{t}^{T}\PILC_{t+1}A_{t} \\
  \PILC_{t} &=  Q + \Ahat_{t}^{T}\PILC_{t+1}(I + B_{t}R^{-1}\Bhat_{t}^{T}\PILC_{t+1})^{-1}A_{t}
\end{align*}
where we initialize $\PILC_{H} = Q_{f}$ and the control inputs are defined as
$u^{\mathsf{ILC}}_{t} = \KILC_{t}x_{t}$. We can observe that the \ILC{} converges
to control inputs that are different from the ones computed by the optimal
controller $\KOPT$, and hence achieves suboptimal cost. In the next few sections, we
will analyze the suboptimality bounds for both \MM{} and \ILC{}, and show how \ILC{}
converges to control sequence that achieves lower costs and is more robust to
high modeling errors when compared to \MM{}.

\subsection{Assumptions}
\label{sec:assumptions}
In this section, we will present all the assumptions used in our analysis.
Our first assumption is on the cost matrices $Q, Q_f$ and $R$, also used in~\cite{mania19}:
\begin{assumption}
  We assume that $Q, Q_f$, and $R$ are positive-definite matrices. Note that simply
  scaling all of $Q, Q_f$, and $R$ does not change the optimal controller $\KOPT$, so we can
  assume that the smallest singular value of $R$, $\ubar{\sigma}(R) \geq 1$.
  \label{assumption:singularvalue}
\end{assumption}
The above assumption allows us to ignore terms relating to singular
values of $R$ in the analysis, keeping it concise. The
next assumption states that the true system is stable under the
optimal controller $\KOPT$. Similar notions of stability have been considered in~\cite{cohen18}:
\begin{assumption}
  We assume that the optimal controller $\KOPT$ satisfies
  $||A_{t} + B_{t}\KOPT_{t}|| \leq 1 - \delta$ for some
  $0 < \delta \leq 1$ and all $t=0, \cdots, H-1$.
  \label{assumption:stability}
\end{assumption}
Observe that the above assumption implies that $||M_{t}(\KOPT)|| \leq 1 - \delta$
and $||L_{t}(\KOPT)|| \leq (1 - \delta)^{t+1} \leq e^{{-\delta(t+1)}}$.
Finally, we make a crucial assumption about the model that
is required for our \ILC{} analysis:
\begin{assumption}
  We assume that the matrix $B_{t}R^{-1}\Bhat_{t}^{T}$ has eigenvalues that have
  non-negative real parts for all $t=0, \cdots, H-1$. A sufficient condition for
  this to hold is that the
  modeling error satisfy
  $\epsB \leq \frac{\ubar{\sigma}(B_{t}^{T}RB_{t})}{||B_{t}^{T}R||}$
  for all $t=0, \cdots, H-1$.
  \label{assumption:psd}
\end{assumption}
The above assumption ensures that $x^{T}B_{t}R^{-1}\Bhat_{t}^{T}x \geq 0$ for any vector
$x \in \reals^{n}$ and all time steps $t$. Intuitively, if this is not true then \ILC{} is not
guaranteed to converge to a local minima. A more detailed
explanation is given in Appendix~\ref{sec:assumpt-refass}.

\section{Main Results}
\label{sec:main-results}

In this section, we will present the main results concerning the worst
case performance bounds of \MM{} and \ILC{} in the LQR setting with an
approximate model as described in Section~\ref{sec:problem-setup}. Our
first theorem (proof in Appendix~\ref{sec:general-results-1})
bounds the cost suboptimality of any time-varying linear 
controller $\Khat$ in terms of the norm differences $\|\KOPT_t -
\Khat_t\|$:
\begin{restatable}{theorem}{costTheorem}
  \label{theorem:cost}
  Suppose $d \leq n$. Denote
  $\Gamma = 1 + \max_t\{||A_t||, ||B_t||, ||\POPT_t||, ||\KOPT_t||\}$. Then under
  Assumption~\ref{assumption:stability} and if $||\KOPT_{t} -
  \Khat_{t}|| \le \frac{\delta}{2||B_{i}||}$ for all $t=0, \cdots, H-1$, we have
  \begin{equation}
    \label{eq:cost}
    \Vhat_0(x_0) - \VOPT_0(x_0) \leq d\Gamma^{3}\|x_{0}\|^{2} \sum_{t=0}^{H-1} e^{-\delta t}\|\KOPT_{t} - \Khat_{t}\|^{2}
  \end{equation}
\end{restatable}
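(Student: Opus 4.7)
The plan is to track, step by step through the horizon, how the cost-to-go matrix $\Phat_t$ of $\Khat$ (executed in the \emph{true} dynamics) deviates from the optimal $\POPT_t$. Set $\Vhat_t(x) = x^T \Phat_t x$, where $\Phat_t$ obeys the Lyapunov recursion
\[
\Phat_t = Q + \Khat_t^T R \Khat_t + M_t(\Khat)^T \Phat_{t+1} M_t(\Khat), \qquad \Phat_H = Q_f = \POPT_H,
\]
and define $\Delta_t := \Phat_t - \POPT_t \succeq 0$. Then $\Vhat_0(x_0) - \VOPT_0(x_0) = x_0^T \Delta_0 x_0$, so it suffices to bound $\|\Delta_0\|$.

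The central step is to derive an \emph{exact} one-step recursion for $\Delta_t$. Define the Bellman operator $\mathcal{B}_t(K,P) := Q + K^T R K + (A_t + B_t K)^T P (A_t + B_t K)$. As a quadratic in $K$ (matrix Hessian $2(R + B_t^T P B_t)$, unique minimizer $\KOPT_t$ when $P = \POPT_{t+1}$), completing the square yields the identity
\[
\mathcal{B}_t(K, \POPT_{t+1}) - \mathcal{B}_t(\KOPT_t, \POPT_{t+1}) = (K - \KOPT_t)^T (R + B_t^T \POPT_{t+1} B_t)(K - \KOPT_t).
\]
Combining with $\mathcal{B}_t(\Khat_t, \Phat_{t+1}) - \mathcal{B}_t(\Khat_t, \POPT_{t+1}) = M_t(\Khat)^T \Delta_{t+1} M_t(\Khat)$ gives
\[
\Delta_t = M_t(\Khat)^T \Delta_{t+1} M_t(\Khat) + E_t, \quad E_t := (\Khat_t - \KOPT_t)^T (R + B_t^T \POPT_{t+1} B_t)(\Khat_t - \KOPT_t),
\]
which unrolls from $\Delta_H = 0$ to $\Delta_0 = \sum_{t=0}^{H-1} L_{t-1}(\Khat)^T E_t L_{t-1}(\Khat)$ (with $L_{-1}(\Khat) := I$).

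The remaining work is to bound each factor. The triangle inequality, Assumption~\ref{assumption:stability}, and the hypothesis $\|\KOPT_t - \Khat_t\| \le \delta/(2\|B_t\|)$ give
\[
\|M_t(\Khat)\| \le \|M_t(\KOPT)\| + \|B_t\|\|\Khat_t - \KOPT_t\| \le 1 - \delta/2 \le e^{-\delta/2},
\]
hence $\|L_{t-1}(\Khat)\|^2 \le e^{-\delta t}$. For $E_t$, bound $\|R + B_t^T \POPT_{t+1} B_t\| \le \Gamma^3$ (the $B_t^T \POPT_{t+1} B_t$ piece is immediate from $\Gamma$; the $R$ term is absorbed through the Riccati identity $(R + B_t^T \POPT_{t+1} B_t)\KOPT_t = -B_t^T \POPT_{t+1} A_t$). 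Since $\Khat_t - \KOPT_t \in \mathbb{R}^{d \times n}$ has rank at most $d \le n$, $\|E_t\|_{op} \le \mathrm{Tr}(E_t) \le d\,\Gamma^3 \|\Khat_t - \KOPT_t\|^2$, which is where the dimension factor and the hypothesis $d \le n$ enter. Substituting and using $x_0^T \Delta_0 x_0 \le \|x_0\|^2 \sum_t \|L_{t-1}(\Khat)\|^2 \|E_t\|$ produces the claimed bound.

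The main obstacle is the exact Bellman identity: the cross terms only collapse cleanly after applying the LQR optimality condition $(R + B_t^T \POPT_{t+1} B_t)\KOPT_t = -B_t^T \POPT_{t+1} A_t$ in both transposed and un-transposed forms, so that the linear-in-$(K-\KOPT_t)$ contributions cancel and what remains is the pure quadratic form with Hessian $R + B_t^T \POPT_{t+1} B_t$. Once this identity is in hand, the rest is routine norm manipulation plus Assumption~\ref{assumption:stability}; the only other subtlety is making sure the $R$ contribution inside $\|R + B_t^T \POPT_{t+1} B_t\|$ is absorbed into $\Gamma^3$ rather than left as a free parameter.
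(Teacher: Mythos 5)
Your proposal is correct and is essentially the paper's own argument: your completed-square recursion for $\Delta_t = \Phat_t - \POPT_t$, unrolled to $\Delta_0 = \sum_t L_{t-1}(\Khat)^T E_t L_{t-1}(\Khat)$, is exactly the performance difference lemma (Lemma~\ref{lemma:performance-difference}, from Fazel et al.) that the paper cites, and your bounds on $\|M_t(\Khat)\|$, $\|L_{t-1}(\Khat)\|$, $\|R + B_t^T\POPT_{t+1}B_t\| \le \Gamma^3$, and the rank-$d$ trace/Frobenius step coincide with the paper's use of Lemma~\ref{lemma:stability} and the same norm estimates. The only difference is that you re-derive the two cited lemmas inline rather than invoking them.
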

This theorem is central to our analysis as it states that as long as we can keep
the norm differences $||\Khat_{t} - \KOPT_{t}||$ small, then the cost
suboptimality scales 
with the norm difference squared at each time step and goes exponentially down
with time step. % TODO: Could add some more comments on interpreting
                % the above result
We will now present results on how we can
bound these norm differences for both \MM{} and \ILC{}.

\paragraph{Results for Optimal Control with Misspecified Model}
\label{sec:optimal-control-with}

Our next
lemma (proof in Appendix~\ref{sec:cert-equiv-contr-2}) bounds the difference $\|\KCE_t - \KOPT_t\|$ in terms of
$\|\PCE_{t+1} - \POPT_{t+1}\|$ and modeling errors $\epsA, \epsB$:
\begin{restatable}{lemma}{ceLemma}
  \label{lemma:ce}
  If $||A_{t} - \Ahat_{t}|| \leq \epsA$ and
  $||B_{t} - \Bhat_{t}|| \leq \epsB$ for $t=0, \cdots, H-1$, and we have
  $||\POPT_{t+1} - \PCE_{t+1}|| \leq \fCE_{t+1}(\epsA, \epsB)$ for some function
  $\fCE_{t+1}$. Then we have under
  Assumption~\ref{assumption:singularvalue} for all $t=0, \cdots, H-1$,
  \begin{align}
    ||\KOPT_{t} - \KCE_{t}|| \leq 14\Gamma^3\epsilon_t
    \label{eq:ce-k-diff}
  \end{align}
  where
  $\Gamma = 1 + \max_{t}\{||A_{t}||, ||B_{t}||, ||\POPT_{t}||, ||\KOPT_{t}||\}$
  and $\epsilon_{t} = \max\{\epsA, \epsB, \fCE_{t+1}(\epsA, \epsB)\}$.
\end{restatable}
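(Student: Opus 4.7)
The plan is to write $\KOPT_t - \KCE_t$ as the difference of two expressions $-E_t^{-1}F_t$ and $-\hat{E}_t^{-1}\hat{F}_t$, where $E_t = R + B_t^T\POPT_{t+1}B_t$, $\hat{E}_t = R + \Bhat_t^T\PCE_{t+1}\Bhat_t$, $F_t = B_t^T\POPT_{t+1}A_t$, and $\hat{F}_t = \Bhat_t^T\PCE_{t+1}\Ahat_t$. Applying the classical inverse-difference identity
\[ E_t^{-1}F_t - \hat{E}_t^{-1}\hat{F}_t = E_t^{-1}(F_t - \hat{F}_t) + E_t^{-1}(\hat{E}_t - E_t)\hat{E}_t^{-1}\hat{F}_t \]
reduces the task to bounding two matrix differences, the norm of $\hat{F}_t$, and the two inverse norms.

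Next, I would decompose each matrix difference via a telescoping three-term expansion such as
\[ F_t - \hat{F}_t = (B_t - \Bhat_t)^T\POPT_{t+1}A_t + \Bhat_t^T(\POPT_{t+1} - \PCE_{t+1})A_t + \Bhat_t^T\PCE_{t+1}(A_t - \Ahat_t), \]
and analogously for $\hat{E}_t - E_t$. Each resulting term contains exactly one factor of norm at most $\epsilon_t$ (either $\epsA$, $\epsB$, or $\fCE_{t+1}(\epsA,\epsB)$), while every remaining factor is bounded either by $\Gamma$ (for quantities pertaining to the true system) or by $\Gamma + \epsilon_t$ (for $\Ahat_t$, $\Bhat_t$, and $\PCE_{t+1}$, using the modeling-error hypothesis and the assumed bound on $||\POPT_{t+1} - \PCE_{t+1}||$).

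To control the two inverses, I would invoke Assumption~\ref{assumption:singularvalue} together with positive semidefiniteness of $\POPT_{t+1}$ and $\PCE_{t+1}$. The latter follows by a short backward induction on the $\PCE$ Riccati recursion starting from $\PCE_H = Q_f$, using the identity $P(I + BR^{-1}B^T P)^{-1} = (P^{-1} + BR^{-1}B^T)^{-1}$ to see that each update preserves the PSD cone. This yields $E_t, \hat{E}_t \succeq R \succeq I$ and hence $||E_t^{-1}||, ||\hat{E}_t^{-1}|| \le 1$. Combining everything via submultiplicativity produces a bound of the form $c\,\Gamma^3\epsilon_t$ for an explicit constant $c$.

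The main obstacle is simply bookkeeping the constants carefully enough to obtain $c = 14$. With three terms in each of the two matrix-difference decompositions, plus the additional multiplicative factor $||\hat{F}_t|| \le (\Gamma + \epsB)(\Gamma + \fCE_{t+1})(\Gamma + \epsA)$ attached to the $\hat{E}_t - E_t$ piece, one must uniformly upper-bound each $\Gamma + \epsilon_t$ by $2\Gamma$ (which restricts us implicitly to the regime $\epsilon_t \le \Gamma$, the relevant regime for the sequel) and then aggregate. The algebraic structure of the two decompositions guarantees that the highest power of $\Gamma$ in the final bound is exactly $3$ and that precisely one factor of $\epsilon_t$ appears linearly; the numerical constant $14$ emerges from summing the six terms produced by the two telescoping expansions with these uniform norm bounds substituted in.
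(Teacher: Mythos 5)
Your overall strategy---direct perturbation of the gain formula, positive semidefiniteness of $\PCE_{t+1}$ by backward induction on the Riccati recursion, and $\|E_t^{-1}\|,\|\hat E_t^{-1}\|\le 1$ from $E_t,\hat E_t\succeq R\succeq I$---is viable and genuinely different from the paper, which instead evaluates the gradient of the perturbed quadratic at the unperturbed minimizer and invokes a strongly-convex-minimizers perturbation result (Lemma~\ref{lemma:1} combined with Lemma~\ref{lemma:ce-quadratic}). However, as written your bookkeeping does not deliver $14\Gamma^3\epsilon_t$, and the claim that ``the highest power of $\Gamma$ in the final bound is exactly $3$'' is false for the identity you chose. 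Your correction term is $E_t^{-1}(\hat E_t-E_t)\hat E_t^{-1}\hat F_t$, which you bound by $\|E_t^{-1}\|\,\|\hat E_t-E_t\|\,\|\hat E_t^{-1}\|\,\|\hat F_t\|$ with $\|\hat E_t-E_t\|\le 7\Gamma^2\epsilon_t$ and $\|\hat F_t\|\le(\Gamma+\epsB)(\Gamma+\fCE_{t+1}(\epsA,\epsB))(\Gamma+\epsA)\le 8\Gamma^3$; that piece is of order $\Gamma^5\epsilon_t$, not $\Gamma^3\epsilon_t$. The product $\hat E_t^{-1}\hat F_t=-\KCE_t$ is precisely the object you do not control a priori: $\Gamma$ bounds $\|\KOPT_t\|$, not $\|\KCE_t\|$, and writing $\|\KCE_t\|\le\Gamma+\|\KOPT_t-\KCE_t\|$ makes the argument circular (a self-bounding fix works only under an extra smallness condition on $\epsilon_t$ that is not in the hypotheses, and even then yields a constant larger than $14$).

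The repair is to order the resolvent identity the other way, so that the unbounded factor is the optimal gain: $E_t^{-1}F_t-\hat E_t^{-1}\hat F_t=\hat E_t^{-1}\bigl[(F_t-\hat F_t)+(\hat E_t-E_t)E_t^{-1}F_t\bigr]$, and use $\|E_t^{-1}F_t\|=\|\KOPT_t\|\le\Gamma$. With your three-term telescopings and the uniform replacement $\Gamma+\epsA,\ \Gamma+\epsB,\ \Gamma+\fCE_{t+1}(\epsA,\epsB)\le 2\Gamma$ (the regime $\epsA,\epsB,\epsP<\Gamma$ assumed in the paper's quadratic-perturbation lemma), one gets $\|F_t-\hat F_t\|\le 7\Gamma^2\epsilon_t$ and $\|\hat E_t-E_t\|\le 7\Gamma^2\epsilon_t$, hence $\|\KOPT_t-\KCE_t\|\le 7\Gamma^2\epsilon_t+7\Gamma^3\epsilon_t\le 14\Gamma^3\epsilon_t$, recovering exactly the stated constant. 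This reordering is the matrix-level analogue of what the paper does: there, $\nabla_u f_2$ is evaluated at $u_1=K_1x$ with $\|u_1\|\le\|\KOPT_t\|\le\Gamma$, so the large multiplicative factor is always the optimal controller rather than the misspecified one.
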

This result is very promising but there is a big piece still missing: how do we
bound $\fCE_{t+1}(\epsA, \epsB)$. To do this, we need to establish perturbation
bounds for the discrete ricatti equation in the finite
horizon setting. Prior work~\cite{konstantinov93, mania19} has only established such
bounds in the infinite horizon setting using fixed point analysis. Our treatment
is significantly different as the finite horizon solution is not a fixed point.
Our final perturbation bounds are presented in the theorem (proof in
Appendix~\ref{sec:cert-equiv-contr-2}) below:
\begin{restatable}{theorem}{theoremCE}
  \label{theorem:ce}
  If the cost-to-go matrices for the optimal controller and \MM{}
  controller are specified by $\{\POPT_t\}$ and $\{\PCE_t\}$ such that
  $\POPT_H = \PCE_H = Q_f$ then,
  \begin{align}
    \label{eq:14}
    ||\POPT_{t} - \PCE_{t}|| &\leq  \|A_{t}\|^{2}\|\POPT_{t+1}\|^{2}(2\|B_{t}\|\|R^{-1}\|\epsB + \|R^{-1}\|\epsB^{2}) \nonumber\\
    &+ 2\|A_{t}\|\|\POPT_{t+1}\|\epsA + \|\POPT_{t+1}\|\epsA^{2} \nonumber\\
                        &+ c_{\POPT_{t+1}}(\|A_{t}\|+ \epsA)^{2}||\POPT_{t+1} - \PCE_{{t+1}}||
  \end{align}
  for $t=0, \cdots, H-1$ where $c_{\POPT_{t+1}} \in \reals^+$  is a constant that is
  dependent only on $\POPT_{t+1}$ if $\epsA, \epsB$ are small enough such
  that $\|\POPT_{t+1} - \PCE_{t+1}\| \leq \|\POPT_{t+1}\|^{-1}$. Furthermore, the upper
  bound~\eqref{eq:14} is tight up to constants that only depend on the
  true dynamics $A_t, B_t$, cost matrix $R$, and $\POPT_{t+1}$.
\end{restatable}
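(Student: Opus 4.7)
Writing $f(A,B,P) := A^T P (I + B R^{-1} B^T P)^{-1} A$, so that $\POPT_t - Q = f(A_t, B_t, \POPT_{t+1})$ and $\PCE_t - Q = f(\Ahat_t, \Bhat_t, \PCE_{t+1})$, my plan is to telescope the difference through two intermediate quantities, each toggling one argument at a time:
\begin{align*}
\POPT_t - \PCE_t &= \bigl[f(A_t,B_t,\POPT_{t+1}) - f(A_t,\Bhat_t,\POPT_{t+1})\bigr] \\
&\quad + \bigl[f(A_t,\Bhat_t,\POPT_{t+1}) - f(\Ahat_t,\Bhat_t,\POPT_{t+1})\bigr] \\
&\quad + \bigl[f(\Ahat_t,\Bhat_t,\POPT_{t+1}) - f(\Ahat_t,\Bhat_t,\PCE_{t+1})\bigr].
\end{align*}
Each bracketed piece is designed to produce one group of terms in \eqref{eq:14}. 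Throughout, I will repeatedly use the algebraic identity $P(I+YP)^{-1} = (Y + P^{-1})^{-1}$, valid because Assumption~\ref{assumption:singularvalue} forces every cost-to-go matrix to be positive definite; this makes the bookkeeping of norms much cleaner and, in particular, gives $\|(Y + P^{-1})^{-1}\| \leq \|P\|$ for any positive semidefinite $Y$.

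\textbf{The $A$ and $B$ pieces.} For the change-in-$B$ term I rewrite it as $A_t^T\bigl[(Y_t + \POPT_{t+1}^{-1})^{-1} - (\Yhat_t + \POPT_{t+1}^{-1})^{-1}\bigr]A_t$ with $Y_t = B_t R^{-1} B_t^T$ and $\Yhat_t = \Bhat_t R^{-1} \Bhat_t^T$, then apply the resolvent identity $X^{-1}-Z^{-1} = X^{-1}(Z-X)Z^{-1}$. Combining $\|Y_t - \Yhat_t\| \leq \|R^{-1}\|(2\|B_t\|\epsB + \epsB^2)$ with the uniform bound above yields exactly $\|A_t\|^2\|\POPT_{t+1}\|^2(2\|B_t\|\|R^{-1}\|\epsB + \|R^{-1}\|\epsB^2)$. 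For the change-in-$A$ term I use the symmetric splitting $A_t^T M A_t - \Ahat_t^T M \Ahat_t = A_t^T M(A_t - \Ahat_t) + (A_t - \Ahat_t)^T M \Ahat_t$ with $M = (\Yhat_t + \POPT_{t+1}^{-1})^{-1}$; using $\|M\| \leq \|\POPT_{t+1}\|$ and $\|\Ahat_t\| \leq \|A_t\|+\epsA$ delivers the pair $2\|A_t\|\|\POPT_{t+1}\|\epsA + \|\POPT_{t+1}\|\epsA^2$.

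\textbf{The $P$ piece, the main obstacle, and tightness.} The change-in-$P$ term is the delicate one and the main obstacle. Writing $f(\Ahat_t,\Bhat_t,P) = \Ahat_t^T(\Yhat_t + P^{-1})^{-1}\Ahat_t$ and applying the resolvent identity a second time, together with $\POPT_{t+1}^{-1} - \PCE_{t+1}^{-1} = \POPT_{t+1}^{-1}(\PCE_{t+1}-\POPT_{t+1})\PCE_{t+1}^{-1}$, produces a product whose middle factor is $\PCE_{t+1}-\POPT_{t+1}$ and whose outer factor is $\|\Ahat_t\|^2 \leq (\|A_t\|+\epsA)^2$. The subtlety is that $\PCE_{t+1}$ still sits inside both inverses, whereas the final bound must be phrased purely in terms of $\POPT_{t+1}$; this is precisely where the hypothesis $\|\POPT_{t+1} - \PCE_{t+1}\| \leq \|\POPT_{t+1}\|^{-1}$ enters, via a one-step Neumann-series argument that yields $\|\PCE_{t+1}^{-1}\| \leq \|\POPT_{t+1}^{-1}\|/(1-\|\POPT_{t+1}^{-1}\|\,\|\POPT_{t+1}-\PCE_{t+1}\|)$, letting every residual $\PCE$-dependent factor be absorbed into a single constant $c_{\POPT_{t+1}}$. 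For tightness, I will exhibit, for each of the four terms in \eqref{eq:14}, a rank-one perturbation family such as $\Ahat_t = A_t + \epsA\, u v^T$ or $\Bhat_t = B_t + \epsB\, u w^T$ aligned with the top singular directions of the corresponding derivatives of $f$, and verify by direct expansion that the chosen term is matched to within multiplicative constants depending only on $A_t, B_t, R$, and $\POPT_{t+1}$.
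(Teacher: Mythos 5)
Your upper-bound argument is essentially the paper's own proof: the paper splits $\POPT_t - \PCE_t$ into a model-error piece, handled by a lemma borrowed from Mania et al.\ whose internal algebra is exactly your $A$- and $B$-toggles and yields the same terms $\|A_t\|^2\|\POPT_{t+1}\|^2\|R^{-1}\|(2\|B_t\|\epsB+\epsB^2)$ and $2\|A_t\|\|\POPT_{t+1}\|\epsA+\|\POPT_{t+1}\|\epsA^2$, plus a $P$-toggle piece bounded by $(\|A_t\|+\epsA)^2\,\kappa_{\POPT_{t+1}}\kappa_{\PCE_{t+1}}\,\|\POPT_{t+1}-\PCE_{t+1}\|$, after which $\kappa_{\PCE_{t+1}}$ is absorbed into $c_{\POPT_{t+1}}$ by the same inverse-perturbation step under the same smallness hypothesis; your $(Y+P^{-1})^{-1}$/resolvent bookkeeping is an inline rendering of the paper's two helper lemmas and reaches identical intermediate bounds. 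The only genuine divergence is the tightness claim: the paper exhibits a concrete scalar system with $\bhat=0$ (the model believes the system is uncontrollable) for which the terms of \eqref{eq:14} are realized up to constants, with the recursive term matched exactly, whereas you only sketch rank-one perturbation families aligned with top singular directions without carrying out the expansion --- a workable plan, but the scalar $\bhat=0$ example is the simpler, already-complete route.
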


The above theorem gives us an upper bound for $\fCE_{t}$ for $t=0,
\cdots, H-1$ in
Lemma~\ref{lemma:ce} with $\fCE_H = 0$. The resulting upper bound on $\|\KOPT_t -
\KCE_t\|$ from Lemma~\ref{lemma:ce} combined with
Theorem~\ref{theorem:cost} gives us the cost 
suboptimality bound for \MM{}. Notice that the
bound on $\fCE_t$ grows quickly as $t$ decreases making $\fCE_{t+1}$
in Lemma~\ref{lemma:ce} the dominant error term that affects the cost
suboptimality of \MM{}.

\paragraph{Results for Iterative Learning Control}
\label{sec:iter-learn-contr-3}

Our final set of results establish similar worst case cost
suboptimality bounds for \ILC{} by first establishing a bound (proof
in Appendix~\ref{sec:iter-learn-contr}) on the
difference $\|\KILC_t - \KOPT_t\|$ in terms of $\|\PILC_{t+1} -
\POPT_{t+1}\|$ and modeling error $\epsA, \epsB$:
\begin{restatable}{lemma}{ilcLemma}
  \label{lemma:ilc}
  If $||A_{t} - \Ahat_{t}|| \leq \epsA$ and
  $||B_{t} - \Bhat_{t}|| \leq \epsB$ for $t=0, \cdots, H-1$, and we have
  $||P_{t+1} - \PILC_{t+1}|| \leq \fILC_{t+1}(\epsA, \epsB)$ for some function
  $\fILC_{t+1}$. Then we have under
  Assumption~\ref{assumption:singularvalue} for all $t=0, \cdots, H-1$,
  \begin{align}
    ||\KOPT_{t} - \KILC_{t}|| \leq 6\Gamma^3\epsilon_t
    \label{eq:ilc-k-diff}
  \end{align}
  where
  $\Gamma = 1 + \max_{t}\{||A_{t}||, ||B_{t}||, ||\POPT_{t}||, ||\KOPT_{t}||\}$
  and $\epsilon_{t} = \max\{\epsA, \epsB, \fILC_{t+1}(\epsA,
  \epsB)\}$. 
\end{restatable}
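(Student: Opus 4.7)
The plan is to prove Lemma~\ref{lemma:ilc} along the template of Lemma~\ref{lemma:ce}, exploiting the key structural difference that $\KILC_t$ retains the true matrices $A_t$ and $B_t$ in several places while only $\Bhat_t$ and $\PILC_{t+1}$ come from the approximate side. Writing $H_1 := R + B_t^T \POPT_{t+1} B_t$ and $H_2 := R + \Bhat_t^T \PILC_{t+1} B_t$, we have $\KOPT_t = -H_1^{-1} B_t^T \POPT_{t+1} A_t$ and $\KILC_t = -H_2^{-1} \Bhat_t^T \PILC_{t+1} A_t$. Because the outer $A_t$ is common to both expressions, no $\epsA$ enters the decomposition except through $\fILC_{t+1}(\epsA,\epsB)$; this is ultimately the reason the leading constant will be $6$ rather than the $14$ that appears for \MM{}.

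The central decomposition is an add-and-subtract of the hybrid term $H_2^{-1} B_t^T \POPT_{t+1} A_t$, giving
\begin{equation*}
\KOPT_t - \KILC_t = (H_2^{-1}-H_1^{-1})\, B_t^T \POPT_{t+1} A_t + H_2^{-1}\bigl(\Bhat_t^T \PILC_{t+1} - B_t^T \POPT_{t+1}\bigr) A_t .
\end{equation*}
For the first piece I would apply the resolvent identity $H_2^{-1}-H_1^{-1}=H_2^{-1}(H_1-H_2)H_1^{-1}$, recognize $H_1^{-1}B_t^T\POPT_{t+1}A_t=-\KOPT_t$, and then split $H_1-H_2=(B_t-\Bhat_t)^T\PILC_{t+1}B_t+B_t^T(\POPT_{t+1}-\PILC_{t+1})B_t$ by another add-and-subtract. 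The second piece splits analogously via $\Bhat_t^T\PILC_{t+1}-B_t^T\POPT_{t+1}=(\Bhat_t-B_t)^T\PILC_{t+1}+B_t^T(\PILC_{t+1}-\POPT_{t+1})$. This isolates the two error sources $\|B_t-\Bhat_t\|\leq\epsB$ and $\|\POPT_{t+1}-\PILC_{t+1}\|\leq\fILC_{t+1}(\epsA,\epsB)$, each bounded by $\epsilon_t$.

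For the operator-norm bookkeeping, Assumption~\ref{assumption:singularvalue} together with positive semi-definiteness of $B_t^T\POPT_{t+1}B_t$ gives $\|H_1^{-1}\|\leq 1$; Assumption~\ref{assumption:psd} is what allows the analogous bound on $\|H_2^{-1}\|$, its role here being to certify invertibility and keep the quadratic form $x^T H_2 x$ dominated by the $R$-contribution. With $\|A_t\|,\|B_t\|,\|\POPT_{t+1}\|,\|\KOPT_t\|\leq\Gamma$ and $\|\PILC_{t+1}\|\leq\|\POPT_{t+1}\|+\fILC_{t+1}\leq 2\Gamma$ in the working regime $\fILC_{t+1}\leq\Gamma$, I would take norms through the four resulting summands and collect; each is bounded by a small integer multiple of $\Gamma^3\epsilon_t$, and a careful count gives exactly $6\Gamma^3\epsilon_t$.

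The main obstacle I anticipate is the constant tracking: to land exactly on the coefficient $6$ (rather than, say, $8$) requires keeping the decomposition minimal and invoking $\|\PILC_{t+1}\|\leq 2\Gamma$ only in the one place where it is strictly needed, so that spurious factors of $\Gamma$ do not accumulate. A second subtle point is the bound on $\|H_2^{-1}\|$: unlike $H_1$, the matrix $\Bhat_t^T\PILC_{t+1}B_t$ is not symmetric, so the smallest-singular-value argument that works for $H_1$ is not directly available, and it is precisely here that Assumption~\ref{assumption:psd} pays off by ensuring $H_2$ still inherits enough coercivity from $R$.
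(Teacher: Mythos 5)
Your route is genuinely different from the paper's. The paper never manipulates the closed-form gains directly: it proves Lemma~\ref{lemma:ilc-quadratic} (the \ILC{} analogue of Lemma~\ref{lemma:ce-quadratic}) by bounding the difference of the gradients $\nabla_u f_1-\nabla_u f_2$ --- which, exactly as you observe, involves only $\epsB$ and $\epsP$ because $A_1$ and the inner $B_1$ are shared --- evaluating at $u_1=K_1x$, and then invoking the strongly-convex-minimizer perturbation Lemma~\ref{lemma:1} to convert the gradient bound into $\|K_1-K_2\|\leq 2\Gamma^3(\epsB+2\epsP)/\ubar{\sigma}(R)$; Lemma~\ref{lemma:ilc} then follows by taking $\epsP=\fILC_{t+1}(\epsA,\epsB)$, $\ubar{\sigma}(R)\geq 1$, and $\epsB,\epsP\leq\epsilon_t$, which gives $2\Gamma^3\cdot 3\epsilon_t=6\Gamma^3\epsilon_t$. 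Your add-and-subtract of $H_2^{-1}B_t^T\POPT_{t+1}A_t$, the resolvent identity, and the recognition of $-\KOPT_t$ are algebraically correct, and your bookkeeping (each of the two pieces contributes at most $(2\epsB+\epsP)\Gamma^3$ and $(2\epsB+\epsP)\Gamma^2$ respectively, so the total is at most $3(\Gamma^3+\Gamma^2)\epsilon_t\leq 6\Gamma^3\epsilon_t$) does land on the stated constant, so the direct perturbation route is viable and in some ways more explicit than the paper's.

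The genuine weak point is your bound on $\|H_2^{-1}\|$ for the non-symmetric matrix $H_2=R+\Bhat_t^T\PILC_{t+1}B_t$. Assumption~\ref{assumption:psd} only states that the eigenvalues of $B_tR^{-1}\Bhat_t^T$ have non-negative real parts; for a non-normal matrix this controls neither the quadratic form $x^TH_2x$ nor the norm of the inverse (an upper-triangular perturbation with all eigenvalues zero can make $\|(I+M)^{-1}\|$ arbitrarily large), so ``coercivity inherited from $R$'' does not follow, and in any case the lemma is stated under Assumption~\ref{assumption:singularvalue} alone. To be fair, the paper's proof buries the same requirement: applying Lemma~\ref{lemma:1} with $f_2$ in the role of the strongly convex function tacitly needs the smallest singular value of the non-symmetric ``Hessian'' $R+\Bhat_t^T\PILC_{t+1}B_t$ to be at least $\ubar{\sigma}(R)$, which is asserted rather than derived. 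So your proposal is essentially a more explicit rendering of the same argument, but to make it complete you must either prove a genuine singular-value lower bound on $H_2$ (e.g.\ assume the symmetric part of $\Bhat_t^T\PILC_{t+1}B_t$ is positive semidefinite, or impose a smallness condition so that $\|\Bhat_t^T\PILC_{t+1}B_t-B_t^T\POPT_{t+1}B_t\|$ is dominated by $\ubar{\sigma}(H_1)$) or, like the paper, fold the issue into a strong-convexity hypothesis; as written, the step ``$\|H_2^{-1}\|\leq 1$ by Assumption~\ref{assumption:psd}'' is not justified.
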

Similar to \MM{}, we need to bound the crucial term
$\fILC_{t+1}(\epsA, \epsB)$ to bound the norm difference $\|\KOPT_t -
\KILC_t\|$ using Lemma~\ref{lemma:ilc}. We will present perturbation
bounds (proof in Appendix~\ref{sec:iter-learn-contr}) for the \ILC{}
recursion equation given in 
Section~\ref{sec:iter-learn-contr-1} in the finite horizon setting
below:
\begin{restatable}{theorem}{ilcTheorem}
  \label{theorem:ilc}
  If the cost-to-go matrices for the optimal controller and iterative learning
  control are specified by $\{\POPT_{t}\}$ and $\{\PILC_{t}\}$ such
  that $\POPT_{H} = \PILC_{H} = Q_f$ then we have under
  Assumption~\ref{assumption:psd},
  \begin{align}
    \label{eq:15}
    ||\POPT_{t} - \PILC_{t}|| &\leq  \|A_{t}\|^{2}\|\POPT_{t+1}\|^{2}\|B_{t}\|\|R^{-1}\|\epsB
    + \|A_{t}\|\|\POPT_{t+1}\|\epsA \nonumber\\
                        &+ c_{\POPT_{t+1}}||A_{t}||(\|A_{t}\|+ \epsA)||\POPT_{t+1} - \PILC_{{t+1}}||
  \end{align}
  for $t=0, \cdots, H-1$ where $c_{\POPT_{t+1}} \in \reals^+$ is a
  constant that is dependent only on 
  $\POPT_{t+1}$ if $\epsA, \epsB$ are small enough that $\|\POPT_{t+1}
  - \PILC_{t+1}\| \leq \|\POPT_{t+1}\|^{-1}$. Furthermore, the upper bound~\eqref{eq:15} is tight
  upto constants that depend only on the true dynamics $A_t, B_t$,
  cost matrix $R$, and $\POPT_{t+1}$. 
\end{restatable}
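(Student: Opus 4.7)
The plan is to mirror the structure of the \MM{} analysis from Theorem~\ref{theorem:ce} but exploit the structural asymmetry of the \ILC{} recursion
\[
  \PILC_t = Q + \Ahat_t^T \PILC_{t+1}\bigl(I + B_t R^{-1} \Bhat_t^T \PILC_{t+1}\bigr)^{-1} A_t,
\]
in which the outer right multiplier $A_t$ and the left copy of $B_t$ inside the resolvent are the \emph{true} matrices (they come from the rollout), while only $\Ahat_t$ and $\Bhat_t^T$ are the estimated ones. This asymmetry is exactly the reason~\eqref{eq:15} contains only first-order $\epsA$ and $\epsB$ contributions rather than the quadratic ones appearing in~\eqref{eq:14}. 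Writing $S_t = (I + B_t R^{-1} B_t^T \POPT_{t+1})^{-1}$ and $\Shat_t = (I + B_t R^{-1} \Bhat_t^T \PILC_{t+1})^{-1}$, I would first subtract the two recursions and add-and-subtract to obtain
\begin{align*}
  \POPT_t - \PILC_t &= (A_t - \Ahat_t)^T \POPT_{t+1} S_t A_t \\
  &\quad + \Ahat_t^T (\POPT_{t+1} - \PILC_{t+1}) S_t A_t + \Ahat_t^T \PILC_{t+1} (S_t - \Shat_t) A_t.
\end{align*}

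The first summand produces the linear $\|A_t\|\|\POPT_{t+1}\|\epsA$ term directly (only one copy of $A_t$ has been perturbed, so no $\epsA^2$ appears). The second summand yields part of the propagation coefficient on $\|\POPT_{t+1} - \PILC_{t+1}\|$, with factor $\|\Ahat_t\|\|A_t\| \le (\|A_t\| + \epsA)\|A_t\|$, matching the shape of~\eqref{eq:15}. For the third summand I would use the resolvent identity
\[
  S_t - \Shat_t = S_t(\Shat_t^{-1} - S_t^{-1})\Shat_t = S_t B_t R^{-1}\bigl[(\Bhat_t - B_t)^T \PILC_{t+1} + B_t^T(\PILC_{t+1} - \POPT_{t+1})\bigr]\Shat_t,
\]
whose $(\Bhat_t - B_t)^T$ piece feeds the $\|A_t\|^2 \|\POPT_{t+1}\|^2 \|B_t\|\|R^{-1}\|\epsB$ term (again with a single perturbed copy of $B$, the other $B_t$ being the true matrix from the rollout), while the $(\PILC_{t+1} - \POPT_{t+1})$ piece contributes an extra $\|B_t\|^2 \|R^{-1}\|\|\PILC_{t+1}\|$ factor to the propagation coefficient that is absorbed into $c_{\POPT_{t+1}}$. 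To bound $\|S_t\|$ I would use the similarity
\[
  I + B_t R^{-1} B_t^T \POPT_{t+1} = \POPT_{t+1}^{-1/2}\bigl(I + \POPT_{t+1}^{1/2} B_t R^{-1} B_t^T \POPT_{t+1}^{1/2}\bigr)\POPT_{t+1}^{1/2},
\]
whose inner factor is PSD so its resolvent is a contraction, yielding a bound in terms of the condition number of $\POPT_{t+1}$ that is absorbed into $c_{\POPT_{t+1}}$.

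For the tightness claim I would specialize to a scalar instance with $n = d = 1$, $A_t = a$, $B_t = b$, $R = r$, in which both recursions become explicit rational functions of $\epsA$, $\epsB$, and $\PILC_{t+1}$; a first-order Taylor expansion then matches each term of~\eqref{eq:15} up to constants depending only on $a, b, r, \POPT_{t+1}$. The main obstacle I anticipate is controlling $\|\Shat_t\|$: unlike $S_t$, the matrix $B_t R^{-1} \Bhat_t^T \PILC_{t+1}$ inside the resolvent is not symmetrizable by a single similarity transform, so the clean PSD bound is unavailable. Assumption~\ref{assumption:psd} is precisely what is needed to guarantee that $B_t R^{-1} \Bhat_t^T$ has eigenvalues with non-negative real parts, and coupling this with the inductive closeness of $\PILC_{t+1}$ to $\POPT_{t+1}$ lets one treat $\Shat_t$ as a bounded perturbation of $S_t$---this is exactly why the theorem restricts to $\epsA, \epsB$ small enough that $\|\POPT_{t+1} - \PILC_{t+1}\| \le \|\POPT_{t+1}\|^{-1}$. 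Once this uniform bound on $\|\Shat_t\|$ is in hand, all remaining work is the same bookkeeping as in the \MM{} proof, but with strictly fewer perturbation-generating copies of $A_t$ and $B_t$.
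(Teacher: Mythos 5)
Your decomposition is algebraically valid and it isolates exactly the right structural fact: because the rollout supplies the true $A_t$ on the right and the true $B_t$ inside the resolvent, only one copy of $A$ and one copy of $B$ get perturbed, so no $\epsA^2$ or $\epsB^2$ terms arise. Your plan for $\|S_t\|$ (symmetrize with $\POPT_{t+1}^{1/2}$, giving a bound depending only on the condition number of $\POPT_{t+1}$) and your use of the closeness hypothesis $\|\POPT_{t+1}-\PILC_{t+1}\|\le\|\POPT_{t+1}\|^{-1}$ to control quantities involving $\PILC_{t+1}$ are in the same spirit as the paper, which bounds $\|(I+B_tR^{-1}\Bhat_t^T\POPT_{t+1})^{-1}\|$ and $\|(I+B_tR^{-1}\Bhat_t^T\PILC_{t+1})^{-1}\|$ by condition numbers and then converts $\kappa_{\PILC_{t+1}}$ into a $\POPT_{t+1}$-only constant via an inverse-perturbation bound. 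The tightness argument via the scalar system is also the paper's route (it takes $\bhat=0$, which makes the matching exact rather than a Taylor expansion).

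The substantive difference is the \emph{order} of the add-and-subtract, and it matters for the constants claimed in \eqref{eq:15}. The paper first perturbs $\Ahat_t$ and the $\Bhat_t^T$ inside the resolvent while keeping $\POPT_{t+1}$ in both terms (handled by Lemma~\ref{lemma:mania-appendix-original} with $S=B_tR^{-1}B_t^T$, $\Shat=B_tR^{-1}\Bhat_t^T$, $Q=\POPT_{t+1}$, so the $\epsB$ coefficient is exactly $\|A_t\|^2\|\POPT_{t+1}\|^2\|B_t\|\|R^{-1}\|$), and only then swaps $\POPT_{t+1}\to\PILC_{t+1}$ with the \emph{same} $M=B_tR^{-1}\Bhat_t^T$ on both sides, so Lemma~\ref{lemma:stackexchange} gives a propagation term $\|A_t\|\|\Ahat_t\|$ times two resolvent norms, each reducible to $\POPT_{t+1}$-dependent quantities. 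In your split, the resolvent difference $S_t-\Shat_t$ mixes the $\epsB$ perturbation with the $(\PILC_{t+1}-\POPT_{t+1})$ difference: the $\epsB$ piece then carries $\|\Ahat_t\|$, $\|\PILC_{t+1}\|^2$ and the product $\|S_t\|\|\Shat_t\|$ instead of the clean $\|A_t\|^2\|\POPT_{t+1}\|^2$ (producing, among other things, an $\epsA\epsB$ cross term), and, more importantly, the $(\PILC_{t+1}-\POPT_{t+1})$ piece adds $\|B_t\|^2\|R^{-1}\|\|\PILC_{t+1}\|\|S_t\|\|\Shat_t\|$ to the propagation coefficient. You propose absorbing this into $c_{\POPT_{t+1}}$, but the theorem asserts that $c_{\POPT_{t+1}}$ depends \emph{only} on $\POPT_{t+1}$, so as written your argument proves a slightly weaker statement (recursion constant depending also on $B_t$ and $R$, and an $\epsB$ coefficient inflated by $\POPT_{t+1}$-dependent factors). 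This is fixable with no new ideas: perform the $P$-swap last with the common $M=B_tR^{-1}\Bhat_t^T$, as in the paper, and the stated coefficients come out verbatim. One further caution shared by both arguments: the contraction bound $\|N(I+MN)^{-1}\|\le\|N\|$ is stated for PSD $M$, while $B_tR^{-1}\Bhat_t^T$ is only guaranteed by Assumption~\ref{assumption:psd} to have eigenvalues with non-negative real parts, so this step needs the assumption invoked explicitly rather than treated as a routine application of the PSD lemma.
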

The above theorem gives us a bound on $\fILC_t$ for $t=0, \cdots, H-1$
in Lemma~\ref{lemma:ilc} with $\fILC_H = 0$. The resulting upper bound
on $\|\KOPT_t - \KILC_t\|$ from Lemma~\ref{lemma:ilc} combined with
Theorem~\ref{theorem:cost} gives us the cost suboptimality bound for
iterative learning control. Similar to \MM{}, the dominant error term in
Lemma~\ref{lemma:ilc} turns out to be $\fILC_{t+1}$ especially for
smaller $t$ as the upper bound~\eqref{eq:15} grows quickly as $t$ decreases.

\section{Interpreting the Worst Case Bounds}
\label{sec:interpr-worst-case}

The recursive bounds
presented in~\eqref{eq:14} and~\eqref{eq:15} make it difficult to
compute a concise bound in Theorem~\ref{theorem:cost}. In this
section, we will explicitly compare the cost suboptimality
bounds for \MM{} and \ILC{} under different scenarios, where the bound can
be simplified.
%Since upper bounds for both \MM{} and \ILC{} are tight (up to constants), we can compare the worst case bounds to effectively compare performance.

\paragraph{Small Modeling Errors}
\label{sec:small-model-errors}

In the regime of small modeling errors $\epsA << 1$ and $\epsB << 1$,
we can ignore quadratic terms $\epsA^2$ and $\epsB^2$ in upper
bound for \MM{}~\eqref{eq:14} which results in an upper bound that
matches that of \ILC{}~\eqref{eq:15} upto a constant. This suggests that
when the modeling errors are small, both \ILC{} and \MM{} have almost the
same worst case performance, with \ILC{} having better performance over
\MM{} by a constant factor. Intuitively, this makes sense as the approximate model is
a very good approximation of the true dynamics, and despite using only the
model, \MM{} can synthesize a near-optimal controller.

\paragraph{Highly Damped Systems}
\label{sec:highly-stable-system}

The second scenario we consider is that of a system that is highly damped
which implies $\|A_t\| << 1$ for all $t=0, \cdots,
H-1$. In this regime, the upper bound for \ILC{}~\eqref{eq:15} goes down
to zero resulting in \ILC{} achieving near-optimal cost despite having
non-zero modeling errors $\epsA, \epsB$. The suboptimality in \ILC{}
(from Lemma~\ref{lemma:ilc}) only
arises from $\epsA, \epsB$ and not from $\fILC_{t+1}$ which is
$0$.
In contrast, the upper
bound for \MM{}~\eqref{eq:14} does not go down to zero and has terms that
depend on $\epsA^2$, which can be significant when $\epsA$ is not
small. Thus, for highly damped systems we have that the worst case
performance of \ILC{} can be significantly better than \MM{}, especially
when $\epsA$ is large. Intuitively, this can be understood by
observing that \ILC{} removes 
the effect of modeling errors by always performing rollouts using true
dynamics, 
while \MM{} errors are exacerbated by using the approximate
model for rollouts.
Interestingly, we also notice that the modeling
error $\epsB$ does not affect the cost-suboptimality in
upper bound for \MM{}~\eqref{eq:14}  when the system is highly damped.

\paragraph{Weakly Controlled Systems}
\label{sec:small-b_t}

For systems with small $\|B_t\| << 1$, i.e. where the control inputs do not
affect the dynamics of the system to a large extent, we can observe
that the upper bound for \ILC{}~\eqref{eq:15} reduces to a bound that
does not depend $\epsB$. In other words, any modeling error $\epsB$ in
estimating the $B_t$ matrices does not affect the upper
bound~\eqref{eq:15} for \ILC{}.
In constrast, the upper bound for \MM{}~\eqref{eq:14} reduces to an
expression that has terms that depend
on $\epsB^2$, which can become significant when $\epsB$ is
large. Thus, for systems with $\|B_t\| << 1$, \ILC{} is robust to any
modeling errors $\epsB$ in the $B_t$ matrices, whereas \MM{} degrades its
worst case performance with increasing $\epsB$.

\paragraph{Modeling Error only at the first time step}
\label{sec:modeling-error-only}

Consider a scenario where the model is inaccurate only at $t=0$,
i.e. $\|A_0 - \Ahat_0\|\leq\epsA$ and 
$\|B_0 - \Bhat_0\|\leq\epsB$, while $\Ahat_t = A_t$ and $\Bhat_t =
B_t$ for all $t=1, \cdots, H-1$. In this case, the upper
bounds~\eqref{eq:14} and~\eqref{eq:15} simplify greatly as $\|\POPT_t
- \PILC_t\| = \|\POPT_t - \PCE_t\| = 0$ for all $t=1, \cdots, H-1$,
and we only have upper bounds on $||\POPT_0 - \PCE_0||$ and $||\POPT_0
- \PILC_0||$ as given by Theorems~\ref{theorem:ce}
and~\ref{theorem:ilc} which when combined with
Theorem~\ref{theorem:cost} gives us the suboptimality bounds:
% \begin{align*}
%   \|\POPT_0 - \PCE_0\| &\leq
%                          \|A_{0}\|^{2}\|\POPT_{1}\|^{2}(2\|B_{0}\|\|R^{-1}\|\epsB
%                          + \|R^{-1}\|\epsB^{2}) +
%                          2\|A_{0}\|\|\POPT_{1}\|\epsA +
%                          \|\POPT_{1}\|\epsA^{2}  \\
%   \|\POPT_0 - \PILC_0\| & \leq \|A_{0}\|^{2}\|\POPT_{1}\|^{2}\|B_{0}\|\|R^{-1}\|\epsB
%     + \|A_{0}\|\|\POPT_{1}\|\epsA
% \end{align*}
% Setting $\epsilon_0$ as the terms on the right hand side of the above
% bounds in Lemmas~\ref{lemma:ce} and~\ref{lemma:ilc}, and combining
% them with Theorem~\ref{theorem:cost} gives us the suboptimality bounds
\begin{align}
  \Vhat_0^{\mathsf{MM}}(x_0) - V_0^\star(x_0) &\leq
  \mathcal{O}(1)d\Gamma^9\|x_0\|^2(\epsA + \epsA^2 + \epsB +
                                                \epsB^2)^2 \label{eq:ce-cost}\\
  \Vhat_0^{\mathsf{ILC}}(x_0) - V_0^\star(x_0) &\leq
  \mathcal{O}(1)d\Gamma^9\|x_0\|^2(\epsA + \epsB)^2\label{eq:ilc-cost}                                                
\end{align}
The above two cost suboptimality bounds highlight the differences
between \MM{} and \ILC{} in worst case performance. As described in
Section~\ref{sec:small-model-errors}, if $\epsA$ and $\epsB$ are
small, then \MM{} and \ILC{} worst case performances match up to constants as
we can ignore higher order terms.
However, in cases where modeling errors $\epsA$ and $\epsB$ are large
and higher order terms like $\epsA^2\epsB$, $\epsA^4$ etc. start
becoming significant, the worst case performance of \ILC{} tends to be
better than \MM{} as indicated by equations~\eqref{eq:ce-cost}
and~\eqref{eq:ilc-cost}. Furthermore, the conditions for stability
under synthesized control inputs, as
stated in Theorem~\ref{theorem:cost} (and in
Lemma~\ref{lemma:stability} in Appendix~\ref{sec:general-results-1},) is harder to satisfy for \MM{}
when compared to \ILC{}, especially when modeling errors are large.

\section{Empirical Results}
\label{sec:empirical-results}

In this section, we present three empirical
experiments: a linear dynamical system with an approximate model, a
nonlinear inverted pendulum system with misspecified mass, and a nonlinear planar
quadrotor system in the presence of wind. The aim of these
experiments is to show that under high modeling errors, \ILC{} is
more efficient than \MM{}, thus backing our theoretical findings.\footnote{The
  code for all experiments can be found at
\url{https://github.com/vvanirudh/ILC.jl}.}

\subsection{Linear Dynamical System with Approximate Model}
\label{sec:line-dynam-syst}
In this experiment, we use a linear dynamical system with states
$x \in \reals^{2}$ and control inputs $u \in \reals$.
The dynamics of the system
are specified by matrices:
%\begin{align*}
$A_{t} =
      \begin{bmatrix}
        1 & 1 \\
        -3 & 1
      \end{bmatrix},
  B_{t} =
          \begin{bmatrix}
            1 \\
            3
          \end{bmatrix}$.
%\end{align*}
The approximate model we use is constructed by perturbing the dynamics as
follows: $\Ahat_t = A_t + \epsilon\mathbb{I}$, $\Bhat_t = B_t +
\epsilon\begin{bmatrix}1 \\ 0\end{bmatrix}$
% \begin{align*}
%   \Ahat_{t} = A_{t} + \epsilon
%   \begin{bmatrix}
%     1 & 0\\
%     0 & 1
%   \end{bmatrix}
%   , \Bhat_{t} = B_{t} + \epsilon
%         \begin{bmatrix}
%           1 \\ 0
%         \end{bmatrix}
% \end{align*}
for any $\epsilon \geq 0$. Observe that this satisfies
$||\Ahat_{t} - A_{t}|| \leq \epsilon$ and $||\Bhat_{t} - B_{t}|| \leq \epsilon$.
We use a quadratic cost as specified in equation~\ref{eq:7} with matrices:
  $Q = Q_{f} = \mathbb{I}, R = 1$ (more details in Appendix~\ref{sec:line-dynam-syst-1}).
We can solve for the optimal controller $\KOPT$ in closed form using true
dynamics $A_{t}, B_{t}$ as specified in Section~\ref{sec:problem-setup}. We compare
\MM{} controller $\KCE$ 
and iterative learning controller $\KILC$ with approximate model
$\Ahat_{t}, \Bhat_{t}$ in \Cref{fig:lds} where we vary $\epsilon$ along
the X-axis (in log scale) and report the cost suboptimality gap
$V_{0}(x_{0}) - \VOPT_{0}(x_{0})$ on the Y-axis
(in log scale) where $V_{0}(x_{0})$ is the cost incurred by $\KCE$ or $\KILC$.
To ensure that
Assumption~\ref{assumption:psd} is not violated, the X-axis is capped at
$\epsilon = \frac{\ubar{\sigma}(B_{t}^{T}RB_{t})}{||B_{t}^{T}R||}$.
It is important to note that to generate the plot in \Cref{fig:lds} we
directly used the closed form solution for $\KILC$ (as described in
Section~\ref{sec:problem-setup}) and did not run a iterative learning control
algorithm. This was done to ensure that our results do not have any dependence
on how well the step size sequence was tuned for \ILC{}.
% We have also verified that
% running a simple ILC algorithm (LQR with forward pass using true dynamics and
% backward pass using model) with backtracking line search gives us the same
% results as the closed form solution $\KILC$ as long as $\epsilon \leq ||B||$.

\begin{figure}[t]
  \centering
  \subfigure{\label{fig:lds}\includegraphics[width=.32\linewidth]{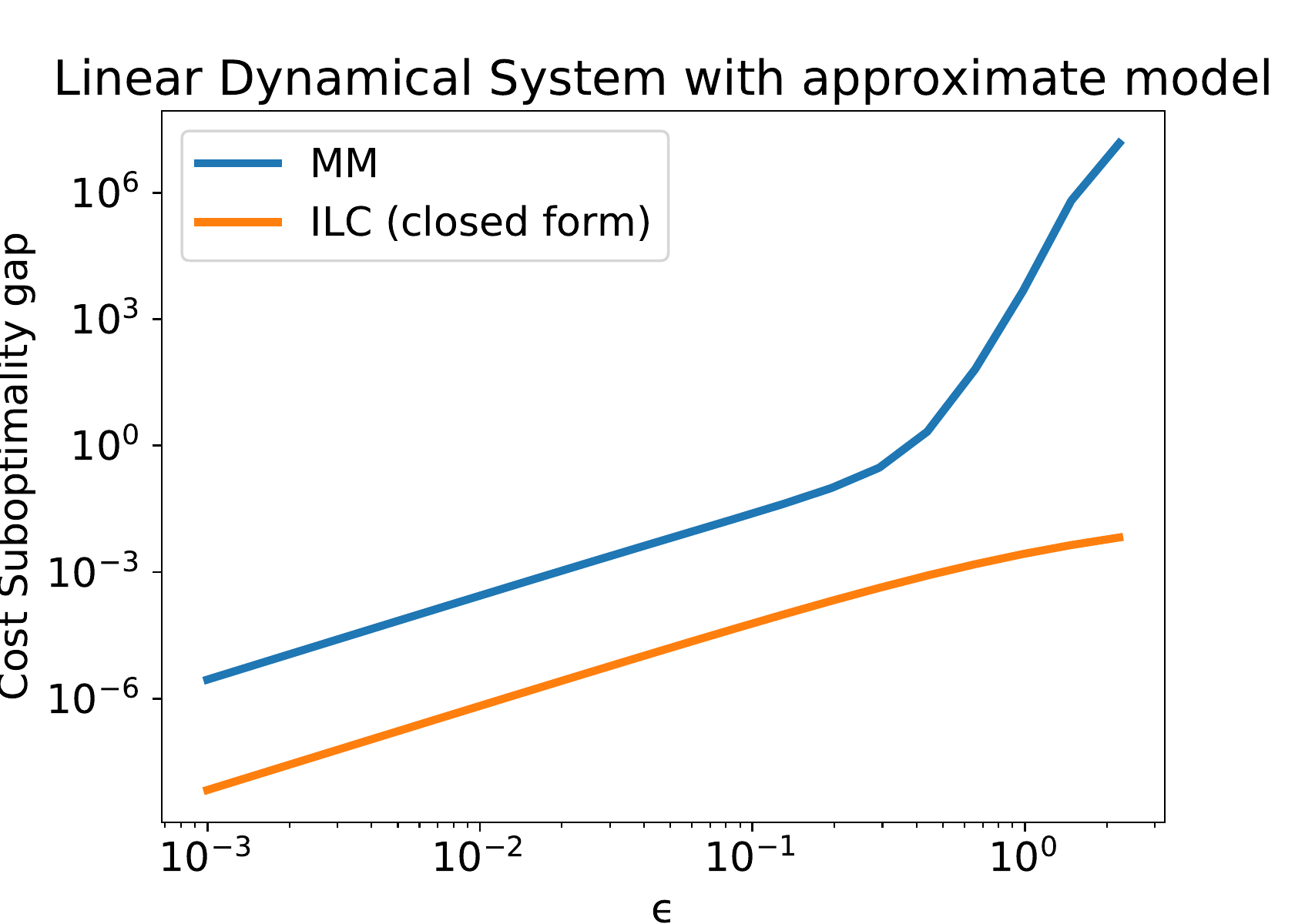}}
  \subfigure{\label{fig:pendulum}\includegraphics[width=.32\linewidth]{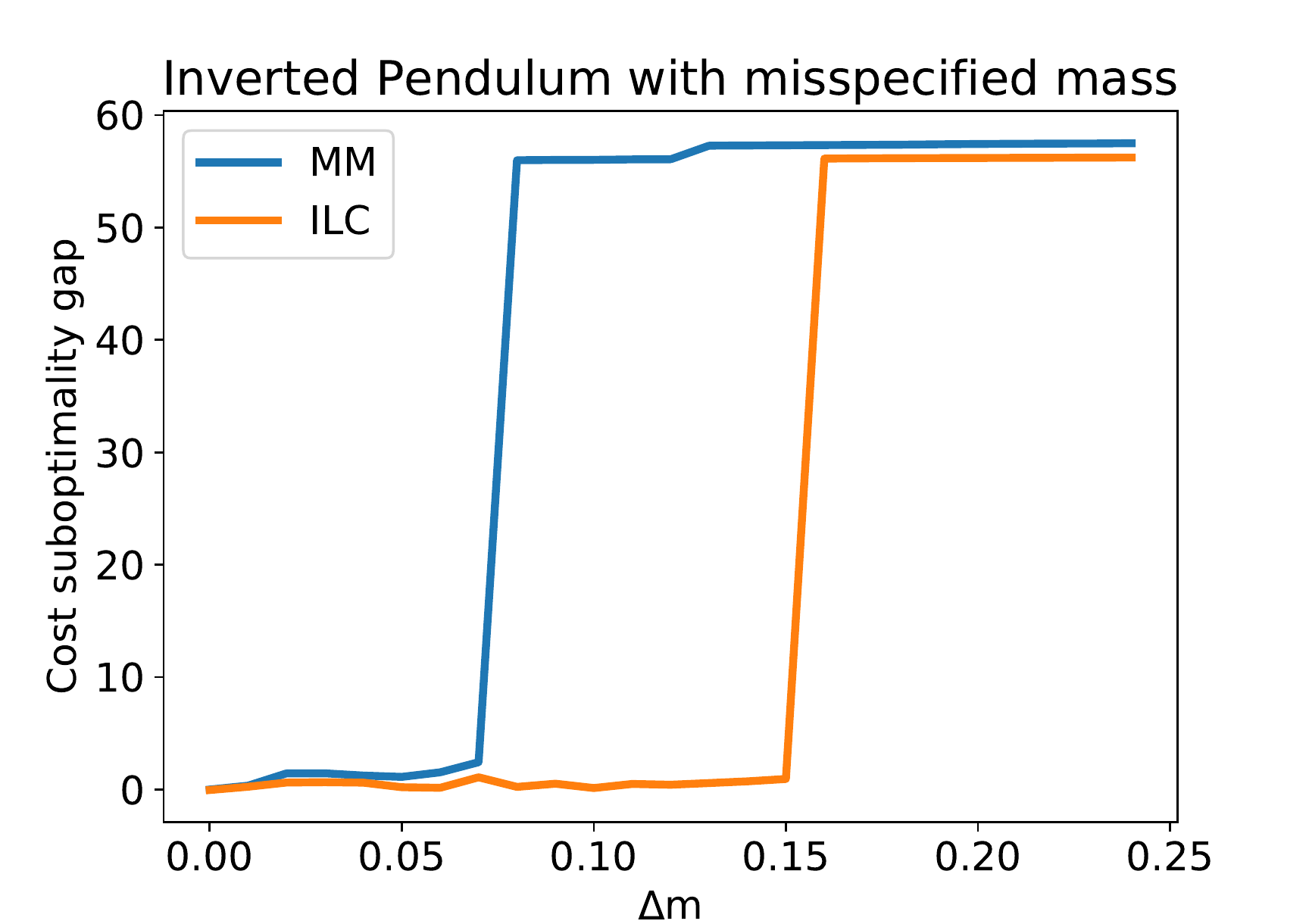}}
  \subfigure{\label{fig:quadrotor}\includegraphics[width=.32\linewidth]{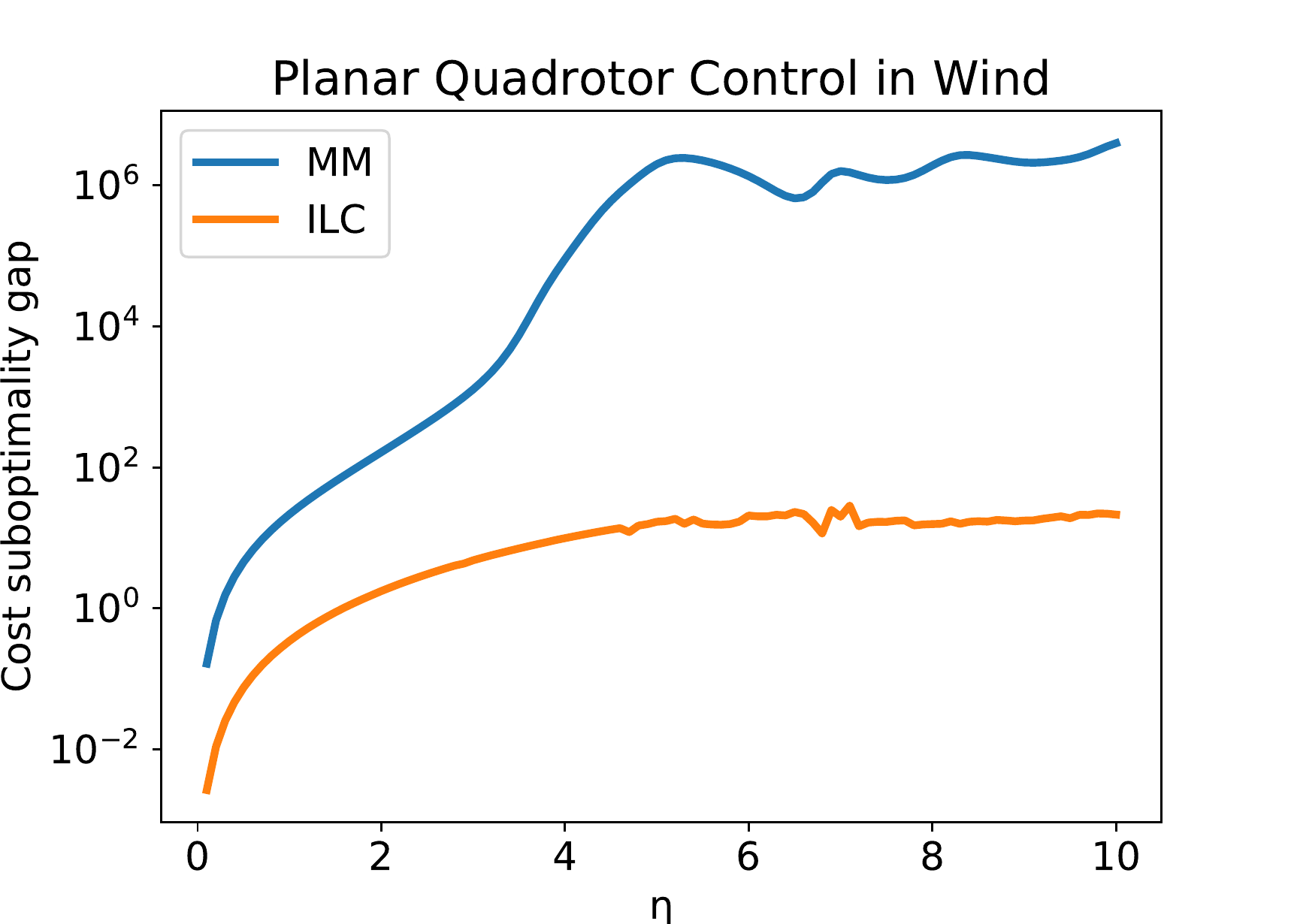}}
  % \begin{subfigure}[b]{.49\linewidth}
  %   \includegraphics[width=\linewidth]{lds.pdf}  
  % \end{subfigure}
  % \begin{subfigure}[b]{0.49\linewidth}
  %   \includegraphics[width=\linewidth]{pendulum.pdf}
  % \end{subfigure}
  \caption{(a) Cost suboptimality gap with varying
    modeling error $\epsilon$ for a
    linear dynamical system. Note that both X-axis
    and Y-axis are in log scale. (b) Cost suboptimality gap with
    varying mass misspecification $\Delta m$ for a nonlinear 
    inverted pendulum system. (c) Cost suboptimality gap for planar
    quadrotor control with varying magnitude of wind $\eta$.}
  \vspace{-0.3cm}
\end{figure}

We can observe that for small modeling errors $\epsilon < 10^{-1}$, \ILC{}
outperforms \MM{} by a constant factor (about $4\times 10^{2}$) as evidenced by the linear
trend in log scale. However in the regime of high modeling errors
$\epsilon > 10^{-1}$ we observe that the gap between \ILC{} and \MM{} is not a
constant factor anymore and grows very quickly as $\epsilon$ increases. This can
be explained by the fact that for high $\epsilon$, the
higher order terms in the gap between \ILC{} and \MM{} starts becoming significant
and results in poor performance for \MM{} when compared to \ILC{}. For large epsilons,
we also observe that the cost for \MM{} blows up to really big values as the system
is not stable anymore under $\KCE$ due to violation of the condition
in Theorem~\ref{theorem:cost} (and in
Lemma~\ref{lemma:stability}.) This experiment validates our claim from the
analysis that \ILC{} tends to perform better in terms of cost and is more robust
when modeling errors are high.

\subsection{Nonlinear Inverted Pendulum with Misspecified Mass}
\label{sec:invert-pend-with}

For the second experiment, we use the nonlinear dynamical system of an inverted
pendulum. The state space is specified by $x =
\begin{bmatrix}
  \theta &
  \dot{\theta}
\end{bmatrix} \in \reals^{2}
$ where $\theta$ is the angle between the pendulum and the vertical axis. The
control input is $u = \tau \in \reals$ specifying the torque $\tau$ to be
applied at the base of the pendulum. The dynamics of the system are
given by the ODE,
%\begin{align*}
$\ddot{\theta} = \frac{\bar{\tau}}{m\ell^{2}} - \frac{g\sin(\theta)}{\ell}$
%\end{align*}
where $m$ is the mass of the pendulum, $\ell$ is the length of the pendulum, $g$
is the acceleration due to gravity, and
$\bar{\tau} = \max(\tau_{\min}, \min(\tau_{\max}, \tau))$ is the clipped torque
based on torque limits (more details in
Appendix~\ref{sec:nonl-invert-pend}). 
We use an approximate model of the dynamics where the mass of the pendulum is
perturbed as $\hat{m} = m + \Delta m$.
This results in dynamics that are nonlinearly perturbed from the true dynamics.
Since the dynamics are nonlinear, we cannot obtain
optimal controls, and \MM{} controls in closed form. Instead, we approximate these
controllers by running iLQR~\cite{li04} (both forward and backward pass) on the
true dynamics and the approximate
dynamics respectively for $200$ iterations. To obtain \ILC{} control inputs, we run
iLQR with forward pass (or rollouts)
using the true dynamics, and backward pass computed using the approximate
dynamics at each iteration.
We chose step sizes for all iLQR runs using
backtracking line search.

% \begin{figure}[t]
%   \centering
%   \includegraphics[width=.5\linewidth]{pendulum.pdf}
%   \caption{Cost suboptimality gap of CE and ILC with varying $\Delta m$ for a
%     nonlinear inverted pendulum system.}
%   \label{fig:pendulum}
% \end{figure}

\Cref{fig:pendulum} shows the cost suboptimality gap of \MM{} and \ILC{} as the
perturbation $\Delta m$ varies. Similar to our previous experiment, we observe
that for small modeling errors $\Delta m < 0.07$ both \ILC{} and \MM{} perform
similarly with \ILC{} outperforming slightly. But as $\Delta m$ grows, the cost of
\MM{} quickly grows saturating at a suboptimality gap around $57$.
% Unlike
% Figure~\ref{fig:lds} we do not observe cost of CE blowing up due to the
% backtracking line search that does not allow control input updates that
% deteriorate performance on the true dynamics.
In contrast, we observe that \ILC{}
is still able to compute near-optimal controls until
$\Delta m = 0.15$ showcasing the robustness of \ILC{} to higher modeling errors.
Beyond $\Delta m = 0.15$, \ILC{} performance also degrades significantly as the
approximate model is not representative of the true dynamics anymore.
Although our analysis in the previous sections was restricted to linear
dynamical systems, we notice a similar trend between \ILC{} and \MM{} in the presence
of nonlinear dynamics namely, in the regime of large modeling errors, \ILC{} tends to
perform better than \MM{}.

\subsection{Nonlinear Planar Quadrotor Control in Wind}
\label{sec:nonl-plan-quadr}

In our final experiment, we compare \MM{} and \ILC{} on a planar quadrotor control
task in the presence of wind. A similar setting was used
in~\cite{DBLP:conf/icml/AgarwalHMS21}. The quadrotor is controlled using two propellers
that provide upward
thrusts $(u_{1}, u_{2})$ and allows movement in the $3$D planar space
described as
$(p_{x}, p_{y}, \theta)$ where $p_{x}, p_{y}$ are X, Y positions, and
$\theta$ is the yaw of the quadrotor. The dynamics of the planar quadrotor
is specified
using a state vector $x \in \reals^{6}$, and control input $u \in
\reals^{2}$ (more details in Appendix~\ref{sec:nonl-plan-quadr-1}).
The quadrotor is flying in the presence of wind which is not captured in modeled
dynamics, but affects the true dynamics of the quadrotor as a dispersive force
field $(\eta p_{x}\mathbf{i} + \eta p_{y} \mathbf{j})$ resulting in
overall dynamics given by:
\begin{align*}
  \ddot{p}_{x} = \frac{1}{m}(u_{1} + u_{2})\sin(\theta) + \eta p_{x}~~~~~~~
  \ddot{p}_{y} = \frac{1}{m}(u_{1} + u_{2})\cos(\theta) - g + \eta p_{y}
\end{align*}
where $\eta \in \reals^{+}$ is a constant that captures magnitude of the wind
force field.

The objective of the task is to move the quadrotor from an initial
state $x_0$
to a final state $x_f$. Similar to
previous experiment, the dynamics are nonlinear and we cannot obtain optimal
controls and
\MM{} controls in closed form. Thus, we again approximate these by running
iLQR on true dynamics and approximate dynamics respectively. We obtain \ILC{}
control inputs again by using iLQR with forward pass using true dynamics and
backward pass using approximate dynamics.
% Since the wind disturbance is
% repetitive, we expect ILC to perform better than feedback-based control in CE.
For all iLQR runs, we choose step sizes by performing backtracking line search
and we initialize the control inputs as the hover controls.
% \begin{figure}[t]
%   \centering
%   \includegraphics[width=.5\linewidth]{quadrotor.pdf}
%   \caption{Cost suboptimality gap of \MM{} and \ILC{} for planar quadrotor control
%     with varying magnitude of wind $\eta$}
%   \label{fig:quadrotor}
% \end{figure}
\Cref{fig:quadrotor} compares \MM{} and \ILC{} for planar quadratic control with
varying magnitude of wind $\eta$. For small wind magnitudes, we
observe that both \MM{} and \ILC{} have good performance. As the wind
magnitude increases, \MM{} quickly diverges and the cost of synthesized
control inputs blows up quickly as the modeled dynamics are incapable
of capturing the dispersive force field exerted by the wind. \ILC{}, on
the other hand, manages to keep the cost from blowing up even at large
wind magnitudes. This reinforces our conclusion that \ILC{} is robust to
large modeling errors while \MM{} can quickly result in the cost blowing
up when the model is highly inaccurate.

\section{Discussion}
\label{sec:discussion}

% Iterative Learning Control is known popularly as a higher
% performance and more robust
% alternative to optimal control with misspecified model when given access to an
% inaccurate dynamical model. Our work takes the first steps in laying
% the theoretical evidence for why \ILC{} has better performance and is
% more robust when compared to
% \MM{}. Unlike past work on analyzing the performance of \MM{} in the
% infinite-horizon setting, we establish our suboptimality bounds in the
% finite horizon setting in which \ILC{} is typically used. We use ricatti perturbation
% proof techniques to prove suboptimality bounds in terms of the
% modeling error $\epsA, \epsB$ for both \ILC{} and \MM{},
% enabling us to compare them. This allows us to identify the reasons for the
% performance and robustness of \ILC{}.

Our analysis shows that the gap between \ILC{} and \MM{} is in higher
order terms that can become
significant when the modeling error $\epsA, \epsB$ is large. This is
backed by our empirical experiments where we observe that as the
magnitude of modeling error increases, the performance gap between \ILC{}
and \MM{} grows rapidly as \MM{} is incapable of handling large modeling
errors and the resulting cost diverges. Furthermore, the conditions
needed for stability of the system under synthesized control inputs,
are easier to satisfy for \ILC{} when compared to \MM{}, especially in the
regime of large modeling errors. This explains the robustness of \ILC{}
over \MM{} for complex control tasks when given access to highly
inaccurate dynamical models. We also identify scenarios where the
norms $\|A_t\|$ and $\|B_t\|$ are small, where \ILC{} is
provably more efficient and more robust to modeling errors, when
compared to \MM{}.

While our current analysis is restricted to the linear quadratic
control setting, exploring similar suboptimality bounds in more
complex and possibly, nonlinear settings is an exciting direction for
future work. Recent work by~\cite{DBLP:conf/icml/SimchowitzF20} uses a
self-bounding ODE method to establish perturbation bounds that
sharpens previous bounds in the infinite horizon setting by only depending on natural
control-theoretic quantities and not relying on controllability
assumptions. It remains to be seen if we can rely on similar
techniques to sharpen the bounds presented in this work.
It would also be interesting to know whether fast rates for
control are possible for cost functions other than quadratic
costs. Finally, comparing iterative learning control and robust
control approaches such as \cite{dean20} would allow us to understand
the regime of modeling errors in which \ILC{} is more suitable than
robust control approaches, and vice versa.

% Acknowledgments---Will not appear in anonymized version
\acks{AV would like to thank Horia Mania for helping with
  understanding some technical lemmas in \cite{mania19}. AV is
  supported by the CMU Presidential Fellowship endowed by TCS.} 

\bibliography{ref}

\appendix

\section{General Results}
\label{sec:general-results-1}

In this section, we will present general results that bound the cost suboptimality of
any time-varying controller $\Khat$ in terms of the norm differences
$||\KOPT_{t} - \Khat_{t}||$. Our first lemma makes use of
Assumption~\ref{assumption:stability} to show that if the norm differences
$||\KOPT_{t} - \Khat_{t}||$ are small, then the true system can be stable under
$\Khat$:
\begin{restatable}{lemma}{stabilityLemma}
  \label{lemma:stability}
  If Assumption~\ref{assumption:stability} holds and if $\Khat$ satisfies
  $||\KOPT_{i} - \Khat_{i}|| \le \frac{\delta}{2||B_{i}||}$ for all
  $i \in \{0, \cdots, H-1\}$, then we have
  \begin{equation}
    \label{eq:1}
    ||L_{t}(\Khat)|| \leq \left(1 - \frac{\delta}{2}\right)^{t+1} \leq e^{-\frac{\delta}{2}(t+1)}
  \end{equation}
\end{restatable}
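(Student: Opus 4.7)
The plan is to establish the bound on $\|L_t(\Khat)\|$ by first controlling the single-step operator norm $\|M_t(\Khat)\|$ for each $t$ and then invoking sub-multiplicativity of the induced matrix norm to bound the product. The key observation is that $\Khat$ is only required to be a small perturbation of $\KOPT$ in the sense of the operator norm, which suggests decomposing the closed-loop dynamics around $\KOPT$ and treating the error contribution as a perturbation.

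\textbf{Step 1: per-step bound.} I would write
\begin{equation*}
M_t(\Khat) \;=\; A_t + B_t \Khat_t \;=\; (A_t + B_t \KOPT_t) + B_t(\Khat_t - \KOPT_t) \;=\; M_t(\KOPT) + B_t(\Khat_t - \KOPT_t),
\end{equation*}
apply the triangle inequality together with sub-multiplicativity of the operator norm, and then use Assumption~\ref{assumption:stability} ($\|M_t(\KOPT)\| \le 1-\delta$) and the hypothesis $\|\Khat_t - \KOPT_t\| \le \delta/(2\|B_t\|)$ to obtain
\begin{equation*}
\|M_t(\Khat)\| \;\le\; (1-\delta) + \|B_t\| \cdot \frac{\delta}{2\|B_t\|} \;=\; 1 - \frac{\delta}{2}.
\end{equation*}

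\textbf{Step 2: product bound.} Since $L_t(\Khat) = \prod_{i=0}^{t} M_i(\Khat)$ and the induced L2 norm is sub-multiplicative, I would conclude
\begin{equation*}
\|L_t(\Khat)\| \;\le\; \prod_{i=0}^{t} \|M_i(\Khat)\| \;\le\; \left(1 - \frac{\delta}{2}\right)^{t+1}.
\end{equation*}
The final inequality $(1 - \delta/2)^{t+1} \le e^{-\delta(t+1)/2}$ then follows from the elementary bound $1 - x \le e^{-x}$ applied with $x = \delta/2 \in (0,1/2]$.

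There is no serious obstacle: the entire argument is a one-line perturbation analysis of the closed-loop matrix followed by a telescoping product. The only subtle ingredient is making sure the hypothesis is stated with $\|B_i\|$ in the denominator so that the perturbation term $\|B_t\|\cdot\|\Khat_t - \KOPT_t\|$ contributes exactly $\delta/2$, leaving the stability margin at $1 - \delta/2 < 1$. This also justifies interpreting $\delta/2$ as the remaining stability margin of the perturbed system, which is exactly what later results in the paper (e.g.\ Theorem~\ref{theorem:cost}, where the factor $e^{-\delta t}$ appears after squaring) rely upon.
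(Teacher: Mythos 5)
Your proposal is correct and follows essentially the same argument as the paper's proof: decompose $M_t(\Khat) = M_t(\KOPT) + B_t(\Khat_t - \KOPT_t)$, bound each factor by $1-\delta/2$ using Assumption~\ref{assumption:stability} and the hypothesis on $\|\KOPT_t - \Khat_t\|$, and conclude by sub-multiplicativity and $1-x \le e^{-x}$.
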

\begin{proof}
  Observe that,
\begin{align*}
  ||L_t(\Khat)|| &= ||\prod_{i=0}^t M_i(\Khat)|| = ||\prod_{i=0}^t A_i
                   + B_i\Khat_i|| \\
  &= ||\prod_{i=0}^t A_i + B_i\KOPT_i + B_i(\Khat_i - \KOPT_i)|| =
    ||\prod_{i=0}^t M_i(\KOPT) + \Delta_i||
\end{align*}
where $\Delta_i = B_i(\Khat_i - \KOPT_i)$. Since the spectral norm is
sub-multiplicative we can see that
\begin{align*}
  ||\prod_{i=0}^t M_i(\KOPT) + \Delta_i|| &\leq \prod_{i=0}^t||M_i(\KOPT) +
                                        \Delta_i|| \\
  &\leq \prod_{i=0}^t(||M_i(\KOPT)|| + ||\Delta_i||)
\end{align*}
where we used the triangle inequality. Now note that
$||M_i(\KOPT)|| \leq 1-\delta$ from assumption~\ref{assumption:stability},
\begin{align*}
  \|\Delta_{i}\| &= \|B_{i}(\Khat_{i} - \KOPT_{i}) \| \leq \|B_{i}\|\|\Khat_{i} - \KOPT_{i}\| \\
                 &\leq \kappa \frac{\delta}{2\kappa} \\
  &\leq \frac{\delta}{2}
\end{align*}
The last inequality above is from our assumption on model errors in the lemma statement.
Combining all of this above, we get
\begin{align*}
  ||L_t(\Khat)|| \leq \prod_{i=0}^t\left(1 - \delta + \frac{\delta}{2}\right) \leq \left( 1-\frac{\delta}{2} \right)^{t+1}
\end{align*}
\end{proof}

The next lemma is very similar to the performance difference lemma that was
first proposed in~\cite{kakade02}. We borrow the version presented
in~\cite{fazel18} and extend it to the finite horizon setting below:
\begin{lemma}
  \label{lemma:performance-difference}
  Let $\xhat_0,
  \uhat_0, \cdots, \xhat_H, \uhat_H$ be the trajectory generated by
  controller
  $\Khat$ using the true dynamics such that $\xhat_0 = x_0$, $\uhat_t = \Khat_t\xhat_t$ for
  $t=0, \cdots, H-1$. Then
  we have:
  \begin{equation}
    \label{eq:2}
    \Vhat_0(x_0) - \VOPT_0(x_0) = \sum_{t=0}^{H-1} \AOPT_t(\xhat_t, \uhat_t) - \VOPT_H(\xhat_H)
  \end{equation}
  where $\Vhat_t$ is the cost-to-go using controller $\Khat$ from time
  step $t$, $\VOPT_t$ is the cost-to-go using the optimal controller $\KOPT$ from time
  step $t$, and $\AOPT_t(x, u) = \QOPT_t(x, u) - \VOPT_t(x)$ is the advantage of
  the controller $\KOPT$ at time step $t$. Furthermore, we have that for any $x$
  \begin{equation*}
    %\label{eq:3}
    \AOPT_t(x, \Khat_tx) = x^T(\Khat_t - \KOPT_t)^T(R + B_t^T\POPT_{t+1}B_t)
    (\Khat_t - \KOPT_t)x
  \end{equation*}
\end{lemma}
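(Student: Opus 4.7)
The plan is to prove both identities by the standard telescoping argument used for the performance difference lemma in dynamic programming. Starting from $\xhat_0 = x_0$, I write $\Vhat_0(x_0) = \sum_{t=0}^{H-1}(\xhat_t^T Q \xhat_t + \uhat_t^T R \uhat_t) + \xhat_H^T Q_f \xhat_H$. Subtracting $\VOPT_0(\xhat_0)$ and inserting the telescoping insertion $\sum_{t=0}^{H-1}\bigl(\VOPT_{t+1}(\xhat_{t+1}) - \VOPT_{t+1}(\xhat_{t+1})\bigr)$, I regroup so that each time step contributes $(\xhat_t^T Q \xhat_t + \uhat_t^T R \uhat_t) + \VOPT_{t+1}(\xhat_{t+1}) - \VOPT_t(\xhat_t)$, and a boundary piece $-\VOPT_H(\xhat_H)$ is left over at the end. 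Using the true dynamics $\xhat_{t+1} = A_t\xhat_t + B_t\uhat_t$, the inner bracket is exactly $\QOPT_t(\xhat_t, \uhat_t) - \VOPT_t(\xhat_t) = \AOPT_t(\xhat_t, \uhat_t)$ by the definition of the optimal Q-function, yielding the first identity.

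For the second identity, I unfold $\AOPT_t(x, \Khat_t x) = \QOPT_t(x, \Khat_t x) - \VOPT_t(x)$ using the explicit form $\QOPT_t(x,u) = x^T Q x + u^T R u + (A_t x + B_t u)^T \POPT_{t+1}(A_t x + B_t u)$. Viewed as a function of $u$ for fixed $x$, this is a quadratic with constant Hessian $2(R + B_t^T \POPT_{t+1} B_t)$, which is positive definite under Assumption~\ref{assumption:singularvalue}. The closed-form expression for $\KOPT_t$ given in Section~\ref{sec:problem-setup} is exactly the first-order optimality condition $(R + B_t^T \POPT_{t+1} B_t)\KOPT_t x + B_t^T \POPT_{t+1} A_t x = 0$, so $u^\star = \KOPT_t x$ is the unique minimizer and $\QOPT_t(x, u^\star) = \VOPT_t(x)$. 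Taylor-expanding the quadratic around $u^\star$, the linear term vanishes and I obtain $\QOPT_t(x, u) - \VOPT_t(x) = (u - \KOPT_t x)^T (R + B_t^T \POPT_{t+1} B_t)(u - \KOPT_t x)$. Substituting $u = \Khat_t x$ gives the claimed formula.

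Both steps are routine; I do not expect any substantive obstacle. The only points that deserve care are (i) checking the index alignment in the telescoping sum so that the terminal contribution $-\VOPT_H(\xhat_H)$ comes out with the correct sign, and (ii) explicitly invoking the Riccati-based closed form for $\KOPT_t$ to justify that the linear term in the quadratic expansion of $\QOPT_t(x, \cdot)$ around $\KOPT_t x$ vanishes. Neither is a genuine difficulty, just bookkeeping, so the proof is essentially a one-line application of Bellman's equation followed by completion of the square.
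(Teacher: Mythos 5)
Your route---telescoping in the optimal value function and then completing the square in $u$---is the standard performance-difference argument and is essentially what the cited reference does; the paper itself gives no derivation beyond deferring to \cite{fazel18}, so a self-contained proof along these lines is exactly right, and your second identity (expanding the quadratic $\QOPT_t(x,\cdot)$ around its minimizer $\KOPT_t x$, whose stationarity is precisely the stated formula for $\KOPT_t$) is correct as written.

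The one place your bookkeeping slips is the boundary term you yourself flagged as needing care. With $\Vhat_0(x_0)=\sum_{t=0}^{H-1}(\xhat_t^TQ\xhat_t+\uhat_t^TR\uhat_t)+\xhat_H^TQ_f\xhat_H$ and the insertion of $\pm\VOPT_{t+1}(\xhat_{t+1})$, the per-step brackets are indeed $\AOPT_t(\xhat_t,\uhat_t)$, but the leftover is $\xhat_H^TQ_f\xhat_H-\VOPT_H(\xhat_H)$, not $-\VOPT_H(\xhat_H)$: you silently dropped the terminal cost of the $\Khat$ trajectory. Since $\POPT_H=Q_f$, that leftover is identically zero, so what your argument actually proves is $\Vhat_0(x_0)-\VOPT_0(x_0)=\sum_{t=0}^{H-1}\AOPT_t(\xhat_t,\uhat_t)$ with no boundary term at all; a one-step check ($H=1$, where $\QOPT_0(x_0,\uhat_0)=\Vhat_0(x_0)$) confirms that \eqref{eq:2} cannot hold as an equality with a genuinely nonzero $-\VOPT_H(\xhat_H)$. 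In other words, the extra $-\VOPT_H(\xhat_H)$ in the lemma statement is spurious---harmless downstream, since Theorem~\ref{theorem:cost} only uses the upper bound obtained by discarding a nonpositive term---and your derivation lands on it only because of the omitted $\xhat_H^TQ_f\xhat_H$. Keep both leftover terms, note that they cancel because $\POPT_H=Q_f$, and state the conclusion either as the exact identity without the boundary term or as the inequality $\Vhat_0(x_0)-\VOPT_0(x_0)\le\sum_{t=0}^{H-1}\AOPT_t(\xhat_t,\uhat_t)$, which is all that is used later.
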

\begin{proof}
  For proof, we refer the readers to~\cite{fazel18}.
\end{proof}

We use the performance difference lemma, as stated above, in the finite horizon
LQR setup and make use of Lemma~\ref{lemma:stability} to establish the
suboptimality bound in terms of the norm differences $||\KOPT_{t} - \Khat_{t}||$:
\costTheorem*
\begin{proof}
  From Lemma~\ref{lemma:performance-difference} we have
  \begin{align*}
    A_{t}(\xhat_{t}, \Khat_{t}\xhat_{t}) &= \xhat_{t}^{T}(\Khat_{t} - \KOPT_{t})^{T}(R + B_{t}^{T}P_{t+1}B_{t})(\Khat_{t} - \KOPT_{t})\xhat_{t}
  \end{align*}
  We know $\xhat_{t} = L_{t-1}(\Khat)x_{0}$ and using the trace identity we get
  \begin{align*}
    A_{t}(\xhat_{t}, \Khat_{t}\xhat_{t}) &= \Tr(L_{t-1}(\Khat)x_{0}x_{0}^{T}(L_{t-1}(\Khat))^{T}(\Khat_{t} - \KOPT_{t})^{T}(R + B_{t}^{T}P_{t+1}B_{t})(\Khat_{t} - \KOPT_{t})) \\
                                         &\leq \|L_{t-1}(\Khat)x_{0}x_{0}^{T}\| \|R + B_{t}^{T}P_{t+1}B_{t}\| \|\Khat_{t} - \KOPT_{t}\|_{F}^{2} \\
    &\leq \|L_{t-1}(\Khat)\|^{2}\|x_{0}\|^{2} \|R + B_{t}^{T}P_{t+1}B_{t}\| \|\Khat_{t} - \KOPT_{t}\|_{F}^{2}
  \end{align*}
  We can bound $\|R + B_{t}^{T}P_{t+1}B_{t}\| \leq \Gamma^{3}$ and
  $\|\Khat_{t} - \KOPT_{t}\|_{F}^{2} \leq \min\{n, d\} \|\Khat_{t} - \KOPT_{t}\|^{2}$.
  We can also use Lemma~\ref{lemma:stability} to bound
  $\|L_{{t-1}}(\Khat)\| \leq \exp\left(-\frac{\delta}{2}t\right)$.
  Combining all
  of this above we get
  \begin{align*}
    A_{t}(\xhat_{t}, \Khat_{t}\xhat_{t}) &\leq \min\{n, d\}\Gamma^{3} \exp\left(-\delta t\right)\|\Khat_{t} - \KOPT_{t}\|^{2}\|x_{0}\|^{2}
  \end{align*}
  Summing over all time steps we
  obtain (using $d \leq n$)
  \begin{align*}
    \Vhat_0(x_0) - V_0(x_0) &\leq d\Gamma^{3}\|x_{0}\|^{2} \sum_{t=0}^{H-1} \exp\left(-\delta t\right)\|\Khat_{t} - \KOPT_{t}\|^{2}
  \end{align*}
\end{proof}

\section{Helpful Lemmas}
\label{sec:helpful-lemmas}

Before we dive into the results, let us present a helpful lemma
borrowed from \cite{mania19}:
\begin{lemma}
  Let $f_1, f_2$ be $\mu$-strongly convex twice differentiable
  functions. Let $x_1 = \arg\min_x f_1(x)$ and $x_2 = \arg\min_x
  f_2(x)$. Suppose $||\nabla f_1(x_2)|| \leq \epsilon$, then $||x_1 -
  x_2|| \leq \frac{\epsilon}{\mu}$
  \label{lemma:1}
\end{lemma}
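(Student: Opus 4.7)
The plan is to exploit two ingredients: the first-order optimality condition $\nabla f_1(x_1) = 0$, and the standard monotonicity (co-coercivity) property of the gradient of a $\mu$-strongly convex function. Recall that $\mu$-strong convexity of a twice differentiable $f_1$ is equivalent to $\nabla^2 f_1 \succeq \mu I$, which implies the inequality
\begin{equation*}
(\nabla f_1(y) - \nabla f_1(z))^T (y - z) \;\geq\; \mu\, \|y - z\|^2
\end{equation*}
for all $y, z$. This is the only property of $f_1$ I will need; $f_2$ enters only through the definition of the evaluation point $x_2$.

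First I would instantiate the monotonicity inequality at $y = x_2$ and $z = x_1$. Since $x_1$ is the unconstrained minimizer of the differentiable function $f_1$, we have $\nabla f_1(x_1) = 0$, so the left-hand side collapses to $\nabla f_1(x_2)^T (x_2 - x_1)$. Applying Cauchy--Schwarz on top of this and using the hypothesis $\|\nabla f_1(x_2)\| \leq \epsilon$ yields
\begin{equation*}
\mu\, \|x_1 - x_2\|^2 \;\leq\; \nabla f_1(x_2)^T (x_2 - x_1) \;\leq\; \|\nabla f_1(x_2)\|\, \|x_1 - x_2\| \;\leq\; \epsilon\, \|x_1 - x_2\|.
\end{equation*}
Dividing through by $\|x_1 - x_2\|$ (the case $x_1 = x_2$ is trivial) gives $\|x_1 - x_2\| \leq \epsilon/\mu$, which is exactly the claim.

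There is no genuine obstacle here; the argument is a two-line consequence of strong convexity. The only points worth verifying cleanly are that $x_1$ and $x_2$ are well-defined (which follows from strong convexity of $f_1$ and $f_2$, guaranteeing unique minimizers), and that the $\mu$-strong convexity of $f_2$ is not actually invoked in the bound — it serves only to make $x_2$ a bona fide unique minimizer so the statement is well-posed. The entire bound depends exclusively on the curvature of $f_1$ and on how close $x_2$ comes to satisfying the stationarity condition of $f_1$.
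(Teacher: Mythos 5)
Your proof is correct, and it takes a mildly different route from the paper's. The paper Taylor-expands the gradient, writing $\nabla f_1(x_2) = \nabla^2 f_1(\tilde{x})(x_2 - x_1)$ for an intermediate point $\tilde{x}$, and then lower-bounds the right-hand side using the Hessian bound $\nabla^2 f_1 \succeq \mu I$; you instead use the strong-monotonicity inequality $(\nabla f_1(x_2) - \nabla f_1(x_1))^T(x_2 - x_1) \geq \mu\|x_2 - x_1\|^2$ together with $\nabla f_1(x_1) = 0$ and Cauchy--Schwarz. The two arguments encode the same curvature information, but yours has a small technical advantage: the mean-value expansion with a single intermediate point $\tilde{x}$ is not literally valid for the vector-valued map $\nabla f_1$ (one should really use the integral form $\int_0^1 \nabla^2 f_1(x_1 + t(x_2 - x_1))\,dt\,(x_2 - x_1)$, or note that the paper's step is fixable that way), whereas the monotonicity inequality sidesteps this entirely and does not even require twice differentiability of $f_1$. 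The paper's version, once stated with the integral Hessian or read charitably, is equally fine and makes the role of the Hessian lower bound explicit, which matches how the lemma is later applied to quadratics where $\nabla^2 f_1 = B_1^T P_1 B_1 + R$ is constant. Your observation that strong convexity of $f_2$ is used only to make $x_2$ well-defined is also accurate.
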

\begin{proof}
  Taylor expanding $\nabla f_{1}$ we get
  \begin{align*}
    \nabla f_{1}(x_{2}) &= \nabla f_{1}(x_{1}) + \nabla^{2}f_{1}(\tilde{x})(x_{2} - x_{1}) \\
    &= \nabla^{2}f_{1}(\tilde{x})(x_{2} - x_{1})
  \end{align*}
  for some $\tilde{x} = tx_{1}+ (1-t)x_{2}$ where $t \in [0, 1]$. Thus we have
  \begin{align*}
    \|\nabla f_{1}(x_{2})\| = \|\nabla^{2}f_{1}(\tilde{x})(x_{2} - x_{1})\| \leq \epsilon
  \end{align*}
  But we know $\|\nabla^{2}f_{1}(\tilde{x})\| \geq \mu$ which gives us
  \begin{align*}
    \|x_{2} - x_{1}\| \leq \frac{\epsilon}{\mu}
  \end{align*}
\end{proof}

The next lemma is a useful fact about positive semi-definite matrices,
also from \cite{mania19},
\begin{lemma}
  \label{lemma:mania-appendix-original}
  Given matrices $A, \Ahat$ such that $\|A - \Ahat\| \leq \epsA$,
  and positive-semidefinite matrices $Q, S, \Shat$ we have
  \begin{equation}
    \label{eq:89}
    \|A^{T}Q(I + SQ)^{-1}A - \Ahat^{T}Q(I + \Shat Q)^{-1}\Ahat\| \leq \|A\|^{2}\|Q\|^{2}\|\Shat - S\| + 2\|A\|\|Q\|\epsA + \|Q\|\epsA^{2}
  \end{equation}
\end{lemma}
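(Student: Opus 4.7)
The plan is to reduce the claim to the triangle inequality applied to a carefully chosen decomposition, combined with a single structural fact about the ``weighted resolvent'' $Q(I + MQ)^{-1}$. First I would write $\Ahat = A + E$ with $\|E\| \le \epsA$, and then insert $A^{T}Q(I + \Shat Q)^{-1}A$ as a pivot in order to split the target difference as
\[
A^{T}Q\bigl[(I+SQ)^{-1} - (I+\Shat Q)^{-1}\bigr]A - E^{T}Q(I+\Shat Q)^{-1}A - A^{T}Q(I+\Shat Q)^{-1}E - E^{T}Q(I+\Shat Q)^{-1}E.
\]
The first (``resolvent'') summand isolates the $S$-vs-$\Shat$ mismatch; the remaining three isolate the $A$-vs-$\Ahat$ mismatch.

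Next I would apply the triangle inequality and submultiplicativity. The three $E$-terms are immediately bounded by $\|A\|\,\|Q(I+\Shat Q)^{-1}\|\,\epsA$, $\|A\|\,\|Q(I+\Shat Q)^{-1}\|\,\epsA$, and $\|Q(I+\Shat Q)^{-1}\|\,\epsA^{2}$ respectively. For the resolvent summand I would apply the identity $X^{-1}-Y^{-1} = X^{-1}(Y-X)Y^{-1}$ with $X = I+SQ$ and $Y = I+\Shat Q$ to rewrite it as $A^{T}Q(I+SQ)^{-1}(\Shat - S)Q(I+\Shat Q)^{-1}A$, whose norm is at most $\|A\|^{2}\,\|Q(I+SQ)^{-1}\|\,\|\Shat - S\|\,\|Q(I+\Shat Q)^{-1}\|$. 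Matching the claimed bound then requires exactly the estimate $\|Q(I+MQ)^{-1}\| \le \|Q\|$ for both $M = S$ and $M = \Shat$.

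To establish this last estimate I would use the symmetrization identity $Q(I+MQ)^{-1} = Q^{1/2}(I + Q^{1/2} M Q^{1/2})^{-1}Q^{1/2}$, which follows by checking $(I+QM)Q = Q(I+MQ)$ and conjugating by $Q^{1/2}$. Because $M$ is PSD, $Q^{1/2} M Q^{1/2}$ is symmetric PSD, so $I + Q^{1/2} M Q^{1/2}$ has all eigenvalues $\ge 1$ and its inverse has operator norm at most $1$; submultiplicativity then yields $\|Q(I+MQ)^{-1}\| \le \|Q\|^{1/2}\cdot 1 \cdot \|Q\|^{1/2} = \|Q\|$. Assembling the three $E$-terms and the resolvent term produces precisely $\|A\|^{2}\|Q\|^{2}\|\Shat - S\| + 2\|A\|\|Q\|\epsA + \|Q\|\epsA^{2}$.

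The main obstacle is precisely this resolvent bound: the pivot-and-expand decomposition is routine, but without the symmetrization trick and the PSD hypothesis on $S, \Shat$ one would be forced to absorb extra spectral factors of $S$ or $\Shat$ (for instance, via a Neumann series or a crude $\|(I+MQ)^{-1}\|$ estimate) and would lose the clean dependence on $\|Q\|$ alone that the lemma claims.
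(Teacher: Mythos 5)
Your proposal is correct and follows essentially the same route as the paper: the identical pivot decomposition into one resolvent term plus three $(\Ahat-A)$ terms, the identity $X^{-1}-Y^{-1}=X^{-1}(Y-X)Y^{-1}$, and the key estimate $\|Q(I+MQ)^{-1}\|\leq\|Q\|$ for PSD $M,Q$. The only difference is that the paper simply cites this last estimate as Lemma~7 of \cite{mania19}, whereas you supply a self-contained proof of it via the symmetrization $Q(I+MQ)^{-1}=Q^{1/2}(I+Q^{1/2}MQ^{1/2})^{-1}Q^{1/2}$, which is a valid (and slightly more complete) substitute.
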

\begin{proof}
  We can rewrite the expression,
  \begin{align*}
    A^{T}Q(I + SQ)^{-1}A &- \Ahat^{T}Q(I + \Shat Q)^{-1}\Ahat = \\
                         &A^{T}Q(I + SQ)^{-1}(\Shat - S)Q(I+\Shat Q)^{-1}A - A^{T}Q(I + \Shat Q)^{-1}(\Ahat - A) \\
    &-(\Ahat - A)^{T}Q(I + \Shat Q)^{-1}A - (\Ahat - A)^{T}Q(I + \Shat Q)^{-1}(\Ahat - A)
  \end{align*}

  Now we make use of Lemma 7 from \cite{mania19} which states that for any two
  positive semidefinite matrices $M, N$ of the same dimension, we have
  $\|N(I + MN)^{-1}\| \leq \|N\|$. Thus, we have $\|Q(I+SQ)^{-1}\| \leq \|Q\|$
  and $\|Q(I + \Shat Q)^{-1}\| \leq \|Q\|$.

  Using the above facts we get,
  \begin{align*}
    \|A^{T}Q(I + SQ)^{-1}A - \Ahat^{T}Q(I + \Shat Q)^{-1}\Ahat\| &\leq \|A\|^{2}\|Q\|^{2}\|\Shat - S\| + 2\|A\|\|Q\|\epsA + \|Q\|\epsA^{2}
  \end{align*}
\end{proof}

Finally, we have a lemma that will be useful in proving ricatti
perturbation bounds,
\begin{lemma}
  Given positive semidefinite matrices $N_{1}, N_{2}, M$ of the same dimensions,
  we have
  \begin{equation}
    \label{eq:90}
    ||N_{1}(I + MN_{1})^{-1} - N_{2}(I + MN_{2})^{-1}|| \leq ||(I + MN_{1})^{-1}|| ||N_{1} - N_{2}|| ||(I + MN_{2})^{-1}||
  \end{equation}
  \label{lemma:stackexchange}
\end{lemma}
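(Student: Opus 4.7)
The plan is to derive the explicit three-factor identity
\[
  N_1(I+MN_1)^{-1} - N_2(I+MN_2)^{-1} \;=\; (I + N_1 M)^{-1}\,(N_1 - N_2)\,(I + M N_2)^{-1},
\]
take spectral norms and apply submultiplicativity, and then use the symmetry of $N_1$ and $M$ to replace $\|(I+N_1M)^{-1}\|$ with the $\|(I+MN_1)^{-1}\|$ required by the statement.

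Concretely, I would first verify the commutation identity $(I+N_1M)N_1 = N_1(I+MN_1)$, which is immediate by expanding both sides. Setting $X := N_1(I+MN_1)^{-1}$ and $Y := N_2(I+MN_2)^{-1}$, I would then multiply $X - Y$ on the left by $(I + N_1 M)$ and on the right by $(I + M N_2)$. The commutation identity collapses $(I+N_1M)N_1(I+MN_1)^{-1}(I+MN_2)$ to $N_1(I+MN_2)$, and the right factor collapses $(I+N_1M)N_2(I+MN_2)^{-1}(I+MN_2)$ to $(I+N_1M)N_2$. The cross terms $N_1 M N_2$ cancel, leaving $N_1 - N_2$. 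Rearranging gives the displayed factorization, after which submultiplicativity of the spectral norm yields
\[
  \|X - Y\| \;\le\; \|(I+N_1M)^{-1}\|\,\|N_1 - N_2\|\,\|(I+MN_2)^{-1}\|.
\]

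The remaining step, and the only subtle one, is replacing $\|(I+N_1M)^{-1}\|$ by $\|(I+MN_1)^{-1}\|$, since in general $\|(I+AB)^{-1}\| \ne \|(I+BA)^{-1}\|$. Here the positive semidefinite hypothesis is exactly what is needed: because $N_1$ and $M$ are symmetric, $(I+MN_1)^T = I + N_1 M$, so $(I+N_1M)^{-1} = [(I+MN_1)^{-1}]^T$, and the spectral norm is invariant under transposition. Along the way, I would briefly note that $I+MN_i$ is indeed invertible, since $MN_i$ shares its spectrum with the PSD matrix $M^{1/2}N_iM^{1/2}$ and therefore has nonnegative eigenvalues, so every eigenvalue of $I+MN_i$ is at least $1$. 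The main obstacle is recognizing that the symmetry of the two PSD inputs is doing essential work in matching the factor norms that appear in the bound to those stated in the lemma.
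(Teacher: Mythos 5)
Your proof is correct and follows essentially the same route as the paper's: both establish the factorization $N_1(I+MN_1)^{-1}-N_2(I+MN_2)^{-1}=(I+N_1M)^{-1}(N_1-N_2)(I+MN_2)^{-1}$ and then conclude by submultiplicativity plus transpose invariance of the spectral norm, using symmetry of the PSD matrices to identify $\|(I+N_1M)^{-1}\|$ with $\|(I+MN_1)^{-1}\|$. The only difference is cosmetic --- you obtain the identity by sandwiching the difference between $(I+N_1M)$ and $(I+MN_2)$, whereas the paper adds and subtracts $N_1(I+MN_2)^{-1}$ and invokes $I-N_1(I+MN_1)^{-1}M=(I+N_1M)^{-1}$ --- and your explicit invertibility check is a small bonus the paper omits.
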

\begin{proof}
  We can rewrite the expression as,
  \begin{align*}
    &N_1(I+MN_1)^{-1}-N_2(I+MN_2)^{-1}\\
    &=\left[N_1(I+MN_1)^{-1}-N_1(I+MN_2)^{-1}\right]
    +\left[N_1(I+MN_2)^{-1}-N_2(I+MN_2)^{-1}\right]\\
    &=N_1(I+MN_1)^{-1}\left[(I+MN_2)-(I+MN_1)\right](I+MN_2)^{-1}
    +(N_1-N_2)(I+MN_2)^{-1}\\
    &=N_1(I+MN_1)^{-1}M(N_2-N_1)(I+MN_2)^{-1}
    +(N_1-N_2)(I+MN_2)^{-1}\\
    &=\left[I-N_1(I+MN_1)^{-1}M\right](N_1-N_2)(I+MN_2)^{-1} \\
    &=(I + N_{1}M)^{{-1}}(N_1-N_2)(I+MN_2)^{-1}
  \end{align*}
  The rest follows by taking norm on both sides, and using the submultiplicative
  property of the induced norm.
  \begin{align*}
    ||N_{1}(I + MN_{1})^{-1} - N_{2}(I + MN_{2})^{-1}|| &\leq ||(I + N_{1}M)^{-1}|| ||N_{1} - N_{2}|| ||(I + MN_{2})^{-1}|| \\
                                                        &= ||(I + N_{1}M)^{-T}||||N_{1} - N_{2}|| ||(I + MN_{2})^{-1}|| \\
    &= ||(I + MN_{1})^{-1}||||N_{1} - N_{2}|| ||(I + MN_{2})^{-1}||
  \end{align*}
\end{proof}

\section{Optimal Control with Misspecified Model Results}
\label{sec:cert-equiv-contr-2}

The next lemma, from \cite{mania19}, applies the above result to quadratic functions that
are observed in linear quadratic control:
\begin{lemma}
  \label{lemma:ce-quadratic}
  Define $f_1(x, u) = \frac{1}{2}u^TRu +
  \frac{1}{2}(A_1x+B_1u)^TP_1(A_1x+B_1u)$ and similarly define $f_2(x,
  u)$ where $R, P_1, P_2$ are positive-definite matrices. Let $K_1$ be
  such that $u_1 = \arg\min_u f_1(x, u) = K_1x$  for
  any vector $x$. Define the matrix $K_2$ in a similar fashion. Also,
  denote $\Gamma = 1 + \max\{||A_1||, \allowbreak||B_1||, ||P_1||,
  ||K_1||\}$. Suppose there exists $\epsA, \epsB, \epsP > 0$ (and
  $<\Gamma$) such that 
  $||A_1 - A_2|| \leq \epsA$, $||B_1 - B_2|| \leq
  \epsB$, and $||P_1 - P_2|| \leq \epsP$. Then we have,
  \begin{equation}
    \label{eq:73}
    \|K_{1} - K_{2}\| \leq \frac{\Gamma^2\epsA + (3\Gamma^3 +
      2\Gamma^2)\epsB + 4(\Gamma^3 + \Gamma^2)\epsP}{\ubar{\sigma}(R)}
  \end{equation}
\end{lemma}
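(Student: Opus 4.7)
The plan is to invoke Lemma~\ref{lemma:1} with $f_{1}(x,\cdot)$ and $f_{2}(x,\cdot)$ viewed as functions of $u$ with $x$ held fixed. The Hessian of $f_{1}(x,\cdot)$ with respect to $u$ is $R + B_{1}^{T}P_{1}B_{1} \succeq R$, so $f_{1}(x,\cdot)$ is $\ubar{\sigma}(R)$-strongly convex. Since $K_{1}x$ and $K_{2}x$ are the unique minimizers of $f_{1}(x,\cdot)$ and $f_{2}(x,\cdot)$ respectively, Lemma~\ref{lemma:1} yields
\[
\|(K_{1} - K_{2})x\| \leq \frac{\|\nabla_{u} f_{1}(x, K_{2}x)\|}{\ubar{\sigma}(R)}.
\]
If the right-hand side can be shown to scale linearly in $\|x\|$ with a constant independent of $x$, then taking a supremum over unit vectors will give the desired operator-norm bound on $K_{1} - K_{2}$.

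The next step is to expand the gradient: $\nabla_{u} f_{1}(x,u) = Ru + B_{1}^{T}P_{1}(A_{1}x + B_{1}u)$, so evaluating at $u = K_{2}x$ and using first-order optimality of $f_{2}$ at its minimizer, $RK_{2} + B_{2}^{T}P_{2}(A_{2}+B_{2}K_{2}) = 0$, to eliminate $RK_{2}$ produces
\[
\nabla_{u} f_{1}(x, K_{2}x) = \bigl[B_{1}^{T}P_{1}(A_{1}+B_{1}K_{2}) - B_{2}^{T}P_{2}(A_{2}+B_{2}K_{2})\bigr]\, x.
\]
The remaining task is thus to bound the operator norm of this matrix difference in terms of $\epsA, \epsB, \epsP$ and the spectral norms of the matrices involved.

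I would handle this via a standard telescoping decomposition: writing the bracketed difference as a sum of three pieces, each containing exactly one of $B_{1}^{T} - B_{2}^{T}$, $P_{1} - P_{2}$, or $(A_{1}+B_{1}K_{2}) - (A_{2}+B_{2}K_{2}) = (A_{1}-A_{2}) + (B_{1}-B_{2})K_{2}$. Each piece is then bounded by the appropriate $\epsA$, $\epsB$, or $\epsP$ multiplied by operator norms of $A_{i}, B_{i}, P_{i}$, and $K_{2}$. The main obstacle is that the hypotheses bound $\|K_{1}\|$ but not $\|K_{2}\|$. This I would resolve using the closed-form expression $K_{2} = -(R + B_{2}^{T}P_{2}B_{2})^{-1}B_{2}^{T}P_{2}A_{2}$, together with the singular-value lower bound $\ubar{\sigma}(R + B_{2}^{T}P_{2}B_{2}) \geq \ubar{\sigma}(R) \geq 1$ from Assumption~\ref{assumption:singularvalue} and the perturbed-norm estimates $\|A_{2}\|, \|B_{2}\|, \|P_{2}\| \leq 2\Gamma$, which follow from $\|A_{1}\|, \|B_{1}\|, \|P_{1}\| \leq \Gamma$ and the hypothesis $\epsA, \epsB, \epsP < \Gamma$. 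Collecting the coefficients of $\epsA$, $\epsB$, and $\epsP$ and dividing through by $\ubar{\sigma}(R)$ then yields the stated inequality.
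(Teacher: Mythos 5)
Your overall strategy (strong convexity plus a bound on a gradient evaluated at the other function's minimizer) is the same as the paper's, but you reverse the roles, and that reversal creates a genuine quantitative gap: the bound you would obtain is strictly weaker than the one claimed in \eqref{eq:73}. After using optimality of $f_2$ you must bound
$\bigl\|B_{1}^{T}P_{1}(A_{1}+B_{1}K_{2}) - B_{2}^{T}P_{2}(A_{2}+B_{2}K_{2})\bigr\|$,
and every term of your telescoping decomposition that carries $\epsB$ or $\epsP$ is multiplied by $\|K_{2}\|$ (e.g.\ $(B_1-B_2)^{T}P_1B_1K_2$, $B_2^{T}(P_1-P_2)B_1K_2$, $B_2^{T}P_2(B_1-B_2)K_2$). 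The hypotheses give no control on $\|K_2\|$ beyond the closed-form estimate you propose, $\|K_{2}\| \leq \|(R+B_{2}^{T}P_{2}B_{2})^{-1}\|\,\|B_{2}\|\|P_{2}\|\|A_{2}\| \leq 8\Gamma^{3}$, and already this step silently imports $\ubar{\sigma}(R)\geq 1$ (Assumption~\ref{assumption:singularvalue}), which is not among the lemma's hypotheses. Plugging $\|K_2\|\lesssim\Gamma^{3}$ back in gives coefficients of order $\Gamma^{5}$ on $\epsB$ and $\epsP$, whereas \eqref{eq:73} claims coefficients of order at most $\Gamma^{3}$; these exact constants are not cosmetic, since they are summed to produce the $14\Gamma^{3}$ factor in Lemma~\ref{lemma:ce}. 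One could try $\|K_2\|\leq\|K_1\|+\|K_1-K_2\|$ and a self-bounding argument, but closing that loop needs smallness of $\epsB,\epsP$ beyond the stated $\epsA,\epsB,\epsP<\Gamma$.

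The fix is the paper's role assignment: apply Lemma~\ref{lemma:1} with $f_2(x,\cdot)$ as the $\ubar{\sigma}(R)$-strongly convex function and evaluate its gradient at $u_{1}=K_{1}x$, i.e.\ bound $\|\nabla_u f_2(x,u_1)\|$ via the uniform gradient difference $\|\nabla_u f_1(x,u)-\nabla_u f_2(x,u)\|\leq \|B_1^TP_1B_1-B_2^TP_2B_2\|\,\|u\|+\|B_1^TP_1A_1-B_2^TP_2A_2\|\,\|x\|$ together with $\nabla_u f_1(x,u_1)=0$. Then only $\|u_1\|\leq\|K_1\|\leq\Gamma$ enters, never $\|K_2\|$, and the two matrix differences are bounded by $\Gamma^{2}(3\epsB+4\epsP)$ and $\Gamma^{2}(\epsA+2\epsB+4\epsP)$ respectively, which reproduces \eqref{eq:73} exactly.
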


\begin{proof}
  Consider
  \begin{align*}
    \nabla_{u} f_{1}(x, u) &= (B_{1}^{T}P_{1}B_{1} + R)u + B_{1}^{T}P_{1}A_{1}x \\
    \nabla_{u} f_{2}(x, u) &= (B_{2}^{T}P_{2}B_{2} + R)u + B_{2}^{T}P_{2}A_{2}x
  \end{align*}

  Let us bound the difference $\|\nabla_{u} f_{1}(x,u) - \nabla_{u} f_{2}(x, u)\|$
  by bounding each term separately. First consider the term
  \begin{align*}
    \|B_{1}^{T}P_{1}B_{1} - B_{2}^{T}P_{2}B_{2}\| &= \|B_{1}^{T}P_{1}(B_{1} - B_{2}) + (B_{1} - B_{2})^{T}P_{1}B_{2} + B_{2}^{T}(P_{1} - P_{2})B_{2}\| \\
                                                  &\leq \|B_{1}^{T}P_{1}(B_{1} - B_{2})\| + \|(B_{1} - B_{2})^{T}P_{1}B_{2}\| + \|B_{2}^{T}(P_{1} - P_{2})B_{2}\| \\
                                                  &\leq \Gamma^{2}\epsB + \Gamma\epsB(\Gamma + \epsB) + (\Gamma + \epsB)^{2}\epsP \\
    &\leq \Gamma^2(3\epsB + 4\epsP)
  \end{align*}
  where we used the fact that $\|B_{2}\| \leq \Gamma + \epsB$.
  We can similarly bound the term
  \begin{align*}
    \|B_{1}^{T}P_{1}A_{1} - B_{2}^{T}P_{2}A_{2}\| \leq
    \Gamma^{2}(\epsA + 2\epsB + 4\epsP)
  \end{align*}

  Thus, we have for any vector $x$ such that $\|x\| \leq 1$
  \begin{align*}
    \|\nabla_{u} f_{1}(x,u) - \nabla_{u} f_{2}(x, u)\| \leq
    \Gamma^2(3\epsB + 4\epsP)\|u\| + \Gamma^2(\epsA + 2\epsB + 4\epsP)
  \end{align*}
  Substituting $u=u_{1}$ we get
  \begin{align*}
    \|\nabla_{u} f_{2}(x, u_{1})\| \leq \Gamma^2(3\epsB + 4\epsP)\|u_1\| + \Gamma^2(\epsA + 2\epsB + 4\epsP)
  \end{align*}

  We can bound $\|u_{1}\| \leq \|K_{1}\|\|x\| \leq \|K_{1}\| \leq \Gamma$. Then
  from Lemma~\ref{lemma:1} we have,
  \begin{align*}
    \|u_{1} - u_{2}\| &\leq \frac{\Gamma^3(3\epsB + 4\epsP) +
                        \Gamma^2(\epsA + 2\epsB + 4\epsP)}{\ubar{\sigma}(R)} \\
    \|K_{1} - K_{2}\| &\leq \frac{\Gamma^2\epsA + (3\Gamma^3 +
      2\Gamma^2)\epsB + 4(\Gamma^3 + \Gamma^2)\epsP}{\ubar{\sigma}(R)}
  \end{align*}
\end{proof}

Now we will prove Lemma~\ref{lemma:ce},
\ceLemma*
\begin{proof}
  Use Assumption~\ref{assumption:singularvalue} and
  Lemma~\ref{lemma:ce-quadratic} for every $t=0, \cdots, H-1$ with 
  $\epsP = \fCE_{t+1}(\epsA, \epsB)$ and choosing $\epsilon_t =
  \max\{\epsA, \epsB, \fCE_{t+1}(\epsA, \epsB)\}$.
\end{proof}

All that is left is to prove Theorem~\ref{theorem:ce} which we will do
now,
\theoremCE*
\begin{proof}
  We know $\POPT_{t}$ satisfies,
\begin{align*}
  \POPT_{t} &= Q + A_t^{T}\POPT_{{t+1}}A_t - A_t^{T}\POPT_{t+1}B_t(R + B_t^{T}\POPT_{{t+1}}B_t)^{-1}B_t^{T}\POPT_{t+1}A_t \\
  &= Q + A_t^{T}\POPT_{t+1}(I + B_tR^{-1}B_t^{T}\POPT_{t+1})^{-1}A_t
\end{align*}
where we used the matrix inversion lemma.

Similarly we have,
\begin{align*}
  \PCE_{t} &= Q + \Ahat_t^{T}\PCE_{t+1}(I + \Bhat_t R^{-1}\Bhat_t^{T}\PCE_{t+1})^{{-1}}\Ahat_t
\end{align*}

Consider the difference,
\begin{align*}
  \POPT_{t} - \PCE_{t} &= A_t^{T}\POPT_{t+1}(I + B_tR^{-1}B_t^{T}\POPT_{t+1})^{-1}A_t - \Ahat_t^{T}\PCE_{t+1}(I + \Bhat_t R^{-1}\Bhat_t^{T}\PCE_{t+1})^{{-1}}\Ahat_t \\
                    &= A_t^{T}\POPT_{t+1}(I + B_tR^{-1}B_t^{T}\POPT_{t+1})^{-1}A_t - \Ahat_t^{T}\POPT_{t+1}(I + \Bhat_t R^{-1}\Bhat_t^{T}\POPT_{t+1})^{{-1}}\Ahat_t \\
  &+ \Ahat_t^{T}\left(\POPT_{t+1}(I + \Bhat_t R^{-1}\Bhat_t^{T}\POPT_{t+1})^{{-1}} - \PCE_{t+1}(I + \Bhat_t R^{-1}\Bhat_t^{T}\PCE_{t+1})^{{-1}} \right)\Ahat_t
\end{align*}
To bound the above expression, we will make use of
Lemma~\ref{lemma:mania-appendix-original} with $S = B_tR^{-1}B_t^{T}$,
$\Shat = \Bhat_t R^{-1}\Bhat_t^{T}$, 
$Q = \POPT_{t+1}$ and observing that
$\|\Shat - S\| \leq 2\|B_t\|\|R^{-1}\|\epsB + \|R^{-1}\|\epsB^{2}$ we obtain

\begin{align*}
  \|\PCE_{t} - \POPT_{t}\| \leq& \|A_t\|^{2}\|\POPT_{t+1}\|^{2}(2\|B_t\|\|R^{-1}\|\epsB + \|R^{-1}\|\epsB^{2}) + 2\|A_t\|\|\POPT_{t+1}\|\epsA + \|\POPT_{t+1}\|\epsA^{2} \\
  &+ \|\Ahat_t^{T}\left(\POPT_{t+1}(I + \Bhat_t R^{-1}\Bhat_t^{T}\POPT_{t+1})^{{-1}} - \PCE_{t+1}(I + \Bhat_t R^{-1}\Bhat_t^{T}\PCE_{t+1})^{{-1}} \right)\Ahat_t\|
\end{align*}

%\iffalse % START CONDITION NUMBER PROOF
All that remains is to bound the second expression. We will use
Lemma~\ref{lemma:stackexchange} with $N_{1} = \POPT_{t+1}$, $N_{2} = \PCE_{t+1}$ and
$M = \Bhat_t R^{{-1}}\Bhat_t^{T}$ gives us,
\begin{align*}
  &||\POPT_{t+1}(I + \Bhat_t R^{-1}\Bhat_t^{T}\POPT_{t+1})^{{-1}} - \PCE_{t+1}(I + \Bhat_t R^{-1}\Bhat_t^{T}\PCE_{t+1})^{{-1}}|| \\
  &\leq ||(I + \Bhat_t R^{{-1}}\Bhat_t^{T}\POPT_{t+1})^{-1}|| ||\POPT_{t+1} - \PCE_{{t+1}}|| ||(I + \Bhat_t R^{{-1}}\Bhat_t^{T}\PCE_{t+1})^{-1}||
\end{align*}

Thus, we have
\begin{align*}
  ||\POPT_{t} - \PCE_{t}|| &\leq  \|A_t\|^{2}\|\POPT_{t+1}\|^{2}(2\|B_t\|\|R^{-1}\|\epsB + \|R^{-1}\|\epsB^{2}) + 2\|A_t\|\|\POPT_{t+1}\|\epsA + \|\POPT_{t+1}\|\epsA^{2} \\
  &+ \|\Ahat_t\|^{2}||(I + \Bhat_t R^{{-1}}\Bhat_t^{T}\POPT_{t+1})^{-1}|| ||\POPT_{t+1} - \PCE_{{t+1}}|| ||(I + \Bhat_t R^{{-1}}\Bhat_t^{T}\PCE_{t+1})^{-1}||
\end{align*}

Observe that we can bound
\begin{align*}
  ||(I + \Bhat_t R^{{-1}}\Bhat_t^{T}\POPT_{t+1})^{-1}|| &= ||(\POPT_{t+1})^{-1}\POPT_{t+1}(I + \Bhat_t R^{{-1}}\Bhat_t^{T}\POPT_{t+1})^{-1}|| \\
                                                &\leq ||(\POPT_{t+1})^{-1}|| ||\POPT_{t+1}(I + \Bhat_t R^{{-1}}\Bhat_t^{T}\POPT_{t+1})^{-1}|| \\
  &\leq ||(\POPT_{t+1})^{-1}|| ||\POPT_{t+1}|| = \kappa_{\POPT_{t+1}}
\end{align*}
where $\kappa_{\POPT_{t+1}}$ is the condition number of the matrix $\POPT_{t+1}$. This
gives us the bound
\begin{align*}
  ||\POPT_{t} - \PCE_{t}|| &\leq  \|A_t\|^{2}\|\POPT_{t+1}\|^{2}(2\|B_t\|\|R^{-1}\|\epsB + \|R^{-1}\|\epsB^{2}) + 2\|A_t\|\|\POPT_{t+1}\|\epsA + \|\POPT_{t+1}\|\epsA^{2} \\
                        &+
                          \|\Ahat_t\|^{2}\kappa_{\POPT_{t+1}}\kappa_{\PCE_{t+1}}
                          ||\POPT_{t+1} - \PCE_{{t+1}}||                          
\end{align*}

Using the fact that
$||\Ahat_t||^{2} \leq (\|A_t\| + \epsA)^2$ gives us
\begin{align}
  \label{eq:92}
  ||\POPT_{t} - \PCE_{t}|| &\leq  \|A_t\|^{2}\|\POPT_{t+1}\|^{2}(2\|B_t\|\|R^{-1}\|\epsB + \|R^{-1}\|\epsB^{2}) + 2\|A_t\|\|\POPT_{t+1}\|\epsA + \|\POPT_{t+1}\|\epsA^{2} \nonumber\\
                        &+ (\|A_t\| +  \epsA)^2\kappa_{\POPT_{t+1}}\kappa_{\PCE_{t+1}} ||\POPT_{t+1} - \PCE_{{t+1}}||
\end{align}
%\fi % END CONDITION NUMBER PROOF

If $\epsA, \epsB$ are small enough that $\|\POPT_{t+1} -
\PCE_{t+1}\|\|\POPT_{t+1}\| \leq 1$ then we can bound 
\begin{align*}
  \|(\PCE_{t+1})^{-1} - (\POPT_{t+1})^{-1}\| &\leq
  \frac{\|(\POPT_{t+1})^{-1}\|}{1 - \|(\POPT_{t+1})^{-1}\|\|\POPT_{t+1}
                                               - \PCE_{t+1}\|} \\
  &\leq \frac{\|(\POPT_{t+1})^{-1}\|\|\POPT_{t+1}\|}{\|\POPT_{t+1}\| -
    \|(\POPT_{t+1})^{-1}\|} \\
  &\leq \frac{\kappa_{\POPT_{t+1}}}{\|\POPT_{t+1}\|^2 - \kappa_{\POPT_{t+1}}}
\end{align*}
The above result is from~\cite{horn12} (Section 5.8 page 381). Now we
can bound the condition number $\kappa_{\PCE_{t+1}}$ by observing that
$\|(\PCE_{t+1})^{-1}\| \leq \|(\POPT_{t+1})^{-1}\| +
\frac{\kappa_{\POPT_{t+1}}}{\|\POPT_{t+1}\|^2 - \kappa_{\POPT_{t+1}}}$
and $\|\PCE_{t+1}\| \leq \|\POPT_{t+1}\| + \|\PCE_{t+1} -
\POPT_{t+1}\| \leq \|\POPT_{t+1}\| + \frac{1}{\|\POPT_{t+1}\|}$ giving
us
\begin{align*}
  \kappa_{\POPT_{t+1}}\kappa_{\PCE_{t+1}} &= \kappa_{\POPT_{t+1}}\|(\PCE_{t+1})^{-1}\|\|\PCE_{t+1}\| \leq \kappa_{\POPT_{t+1}}(\|(\POPT_{t+1})^{-1}\| +
\frac{\kappa_{\POPT_{t+1}}}{\|\POPT_{t+1}\|^2 -
                        \kappa_{\POPT_{t+1}}})(\|\POPT_{t+1}\| +
                        \frac{1}{\|\POPT_{t+1}\|}) \\
  &= \kappa_{\POPT_{t+1}}^2 +
    \frac{\kappa_{\POPT_{t+1}}^2}{\|\POPT_{t+1}\|^2} + \frac{\kappa_{\POPT_{t+1}}^2}{\|\POPT_{t+1}\|^2 -
                        \kappa_{\POPT_{t+1}}}(\|\POPT_{t+1}\| +
                        \frac{1}{\|\POPT_{t+1}\|})
\end{align*}
Denoting $c_{\POPT_{t+1}}$ as the right hand side expression in the
above inequality we get the desired result.The example that
realizes the upper bound is given in Appendix~\ref{sec:scalar-example-that}.

\end{proof}

\section{Note on Assumption~\ref{assumption:psd}}
\label{sec:assumpt-refass}

Consider the cost-to-go matrix $\PILC_t$ given by
\begin{align*}
  \PILC_{t} &= Q + \Ahat_t^{T}\PILC_{{t+1}}A_t - \Ahat_t^{T}\PILC_{t+1}B_t(R + \Bhat_t^{T}\PILC_{{t+1}}B_t)^{-1}\Bhat_t^{T}\PILC_{t+1}A_t \\
  &= Q + \Ahat_t^{T}\PILC_{t+1}(I + B_tR^{-1}\Bhat_t^{T}\PILC_{t+1})^{-1}A_t
\end{align*}
and the cost-to-go from any state $x$ is given by
\begin{align*}
  V_t(x) = x^T\PILC_tx
\end{align*}
Since this is a quadratic, for it to be convex (and thus, have a
minima) we require the leading 
coefficient to be positive semi-definite. In other words, $\PILC_t$
should have eigenvalues with non-negative real parts. Assuming
$\PILC_{t+1}$ to be positive semi-definite, and observing the fact
that $Q$ is a positive semi-definite matrix, we require that
$B_tR^{-1}\Bhat_t^T$ to have eigenvalues with non-negative real parts
for $\PILC_t$ to be positive semi-definite. Note that this is
trivially satisfied for \MM{} as the leading coefficient there
contains a similar term $B_tR^{-1}B_t^T$ which is positive
semi-definite.

Intuitively, if $B_tR^{-1}\Bhat_t^T$ does not have eigenvalues with
non-negative real parts, then the resulting quadratic cost-to-go
function need not be convex, and \ILC{} will not converge.

\section{Iterative Learning Control Results}
\label{sec:iter-learn-contr}

Our first lemma derives a similar result as
Lemma~\ref{lemma:ce-quadratic} but for the iterative learning control
setting,
\begin{lemma}
  \label{lemma:ilc-quadratic}
  Given functions $f_{1}(x, u)$ and $f_{2}(x, u)$ such that
  $\nabla_{u} f_{1}(x, u) = (B_{1}^{T}P_{1}B_{1} + R)u + B_{1}^{T}P_{1}A_{1}x$
  and
  $\nabla_{u} f_{2}(x, u) = (B_{2}^{T}P_{2}B_{1} + R)u + B_{2}^{T}P_{2}A_{1}x$
  where $R, P_{1}, P_{2}$ are positive-definite matrices. Let $K_{1}$ and
  $K_{2}$ be unique matrices such that $\nabla_{u} f_{1}(x, K_{1}x) = 0$ and
  $\nabla_{u} f_{2}(x, K_{2}x) = 0$ for any
  vector $x$. Also,
  denote $\Gamma = 1 + \max\{||A_1||, ||B_1||, ||P_1||,
  ||K_1||\}$. Suppose there exists $\epsA, \epsB, \epsP > 0$ (and $<\Gamma$) such that
  $||A_1 - A_2|| \leq \epsA$, and $||B_1 - B_2|| \leq
  \epsB$, and $||P_1 - P_2|| \leq \epsP$. Then we have,
  \begin{equation}
    \label{eq:81}
    \|K_{1} - K_{2}\| \leq \frac{2\Gamma^3(\epsB + 2\epsP)}{\ubar{\sigma}(R)}
  \end{equation}
\end{lemma}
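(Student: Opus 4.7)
The overall strategy mirrors the proof of Lemma~\ref{lemma:ce-quadratic}: exploit that $f_1$ is strongly convex in $u$ and apply Lemma~\ref{lemma:1}. Concretely, the Hessian of $f_1$ with respect to $u$ is $B_1^T P_1 B_1 + R \succeq R$, so $f_1$ is $\ubar{\sigma}(R)$-strongly convex. Fix $x$ with $\|x\|\le 1$, let $u_1 = K_1 x$ (the unique minimizer of $f_1(x,\cdot)$), and let $u_2 = K_2 x$ (which by hypothesis satisfies $\nabla_u f_2(x, u_2) = 0$). Lemma~\ref{lemma:1} then yields
\[
\|u_1 - u_2\| \;\le\; \frac{\|\nabla_u f_1(x, u_2)\|}{\ubar{\sigma}(R)}.
\]

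The key step is a cancellation that is special to the ILC form of $\nabla f_2$. Because $\nabla_u f_2(x, u_2) = 0$, I can write $\nabla_u f_1(x, u_2) = \nabla_u f_1(x, u_2) - \nabla_u f_2(x, u_2)$, and the quadratic coefficients $R$ and the $B_1$ on the right of the Hessian match up. A short algebraic manipulation leaves
\[
\nabla_u f_1(x, u_2) \;=\; (B_1^T P_1 - B_2^T P_2)\,(B_1 u_2 + A_1 x).
\]
This is the structural identity that makes the ILC bound cleaner than the \MM{} bound: because $f_2$ uses the \emph{true} $A_1$ (the ILC rollouts are on the true system), no $\epsA$ appears, and only $\|B_1^T P_1 - B_2^T P_2\|$ matters. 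Splitting this as $B_1^T(P_1-P_2) + (B_1-B_2)^T P_2$ and using $\|P_2\|\le\Gamma+\epsP$, $\epsP,\epsB<\Gamma$, gives $\|B_1^T P_1 - B_2^T P_2\|\le \Gamma(\epsB + 2\epsP)$.

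What remains is to bound the second factor $\|B_1 u_2 + A_1 x\| \le \Gamma(\|u_2\|+1)$, for which I need a bound on $\|u_2\|$. I would close the loop by a self-bounding argument: $\|u_2\|\le\|u_1\|+\|u_1-u_2\|\le \Gamma + \|u_1-u_2\|$, and substitute back to obtain an inequality of the form
\[
\|u_1-u_2\|\bigl(1 - \beta\Gamma\bigr) \;\le\; 2\,\beta\,\Gamma^2,
\qquad \beta \;=\; \frac{\Gamma(\epsB+2\epsP)}{\ubar{\sigma}(R)},
\]
which, once $\beta\Gamma$ is small enough (guaranteed by $\epsB,\epsP<\Gamma$ and the implicit smallness conditions used throughout the paper), yields $\|u_1-u_2\|\le 2\Gamma^3(\epsB+2\epsP)/\ubar{\sigma}(R)$. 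Taking the supremum over $\|x\|\le 1$ converts this into the operator-norm bound on $K_1-K_2$ claimed in the lemma.

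The main obstacle, and the reason this is trickier than Lemma~\ref{lemma:ce-quadratic}, is that $f_2$'s Hessian $B_2^T P_2 B_1 + R$ is not symmetric and is not even obviously PSD without invoking Assumption~\ref{assumption:psd}. Hence I cannot directly bound $\|K_2\|$ by a clean ``strong convexity of $f_2$'' argument as in the \MM{} case; the self-bounding step above is what sidesteps this and is where care is needed with constants. The nice payoff is that the cancellation in $\nabla f_1 - \nabla f_2$ makes the final bound free of $\epsA$, which is exactly the effect one expects from ILC always executing the true dynamics in its rollouts.
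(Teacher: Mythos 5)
Your cancellation identity $\nabla_u f_1(x,u) - \nabla_u f_2(x,u) = (B_1^TP_1 - B_2^TP_2)(B_1u + A_1x)$ is correct and is essentially the structure the paper also exploits (the paper bounds the two coefficient differences separately, which amounts to the same estimate $\Gamma(\epsB+2\epsP)$ up to how the cross terms are absorbed). The genuine gap is in the direction in which you apply Lemma~\ref{lemma:1} and the resulting self-bounding step. Because you evaluate $\nabla_u f_1$ at $u_2$, you need a bound on $\|u_2\|$, and your loop-closing inequality $\|u_1-u_2\|(1-c)\le c(\Gamma+1)$ with $c=\Gamma^2(\epsB+2\epsP)/\ubar{\sigma}(R)$ only produces anything when $c<1$. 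That smallness is \emph{not} implied by the lemma's hypotheses: the only conditions are $\epsB,\epsP<\Gamma$ and $\ubar{\sigma}(R)\ge 1$, and with, say, $\epsP$ comparable to $\Gamma$ one has $c\approx 3\Gamma^3\ge 3$, so the argument collapses exactly in the large-error regime the paper cares about. Moreover, even when $c\le 1/2$ your chain gives $\|u_1-u_2\|\le 2c\Gamma/(1-c)$, i.e.\ $4\Gamma^3(\epsB+2\epsP)/\ubar{\sigma}(R)$ rather than the claimed constant $2$; the stated bound is only recovered in the limit $c\to 0$. So as written the proposal proves a weaker statement under an extra, unstated smallness assumption.

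The paper avoids this entirely by running Lemma~\ref{lemma:1} in the opposite direction: it evaluates the \emph{perturbed} gradient $\nabla_u f_2$ at the \emph{known} minimizer $u_1=K_1x$, whose norm is controlled by hypothesis ($\|u_1\|\le\|K_1\|\le\Gamma$), so no bound on $\|u_2\|$ or self-bounding loop is ever needed; one gets $\|\nabla_u f_2(x,u_1)\|\le\Gamma^2(\epsB+2\epsP)(\Gamma+1)$ and then $\|K_1-K_2\|\le 2\Gamma^3(\epsB+2\epsP)/\ubar{\sigma}(R)$ directly. Your identity slots cleanly into that route: since $\nabla_u f_1(x,u_1)=0$, it gives $\nabla_u f_2(x,u_1)=-(B_1^TP_1-B_2^TP_2)(B_1u_1+A_1x)$, and $\|B_1u_1+A_1x\|\le\Gamma(\Gamma+1)$ yields exactly the stated constant. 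The caveat you raise about the ILC ``Hessian'' $R+B_2^TP_2B_1$ being non-symmetric is a fair one, but it applies to the paper's direction too (Lemma~\ref{lemma:1} needs the curvature of the function whose gradient is expanded to be bounded below by $\ubar{\sigma}(R)$, which is where Assumption~\ref{assumption:psd} is implicitly doing work); switching to your direction does not remove that issue, it only trades it for the unjustified smallness condition above.
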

\begin{proof}
  Let us bound the difference $\|\nabla_{u} f_{1}(x,u) - \nabla_{u} f_{2}(x, u)\|$
  by bounding each term separately. First consider the term
  \begin{align*}
    \|B_{1}^{T}P_{1}B_{1} - B_{2}^{T}P_{2}B_{1}\| &= \|(B_{1} - B_{2})^{T}P_{1}B_{1} + B_{2}^{T}(P_{1} - P_{2})B_{1}\| \\
                                                  &\leq \Gamma^{2}\epsB + \Gamma(\Gamma + \epsB)\epsP \\
    &\leq \Gamma^2(\epsB + 2\epsP)
  \end{align*}
  where we used the fact that $\|B_{2}\| \leq \Gamma + \epsB$.
  We can similarly bound the term
  \begin{align*}
    \|B_{1}^{T}P_{1}A_{1} - B_{2}^{T}P_{2}A_{1}\| \leq \Gamma^{2}(\epsB + 2\epsP)
  \end{align*}

  Thus, we have for any vector $x$ such that $\|x\| \leq 1$
  \begin{align*}
    \|\nabla_{u} f_{1}(x,u) - \nabla_{u} f_{2}(x, u)\| \leq \Gamma^{2}(\epsB + 2\epsP)(\|u\| + 1)
  \end{align*}
  Substituting $u=u_{1}$ we get
  \begin{align*}
    \|\nabla_{u} f_{2}(x, u_{1})\| \leq \Gamma^{2}(\epsB + 2\epsP)(\|u_{1}\| + 1)
  \end{align*}

  We can bound $\|u_{1}\| \leq \|K_{1}\|\|x\| \leq \|K_{1}\| \leq \Gamma$. Then
  from Lemma~\ref{lemma:1} we have,
  \begin{align*}
    \|u_{1} - u_{2}\| &\leq \frac{\Gamma^{2}(\epsB + 2\epsP)(\Gamma+1)}{\ubar{\sigma}(R)} \\
    \|K_{1} - K_{2}\| &\leq \frac{2\Gamma^{3}(\epsB + 2\epsP)}{\ubar{\sigma}(R)}
  \end{align*}
\end{proof}

Now we will prove Lemma~\ref{lemma:ilc},
\ilcLemma*
\begin{proof}
  Use Assumption~\ref{assumption:singularvalue} and
  Lemma~\ref{lemma:ilc-quadratic} for $t = 0, \cdots, H-1$ with 
  $\epsP = \fILC_{t+1}(\epsA, \epsB)$ and choosing $\epsilon_t =
  \max\{\epsA, \epsB, \fILC_{t+1}(\epsA, \epsB)\}$.
\end{proof}

Our final task is to prove Theorem~\ref{theorem:ilc},
\ilcTheorem*
\begin{proof}
  We know $\PILC_{t}$ satisfies,
\begin{align*}
  \PILC_{t} &= Q + \Ahat_t^{T}\PILC_{{t+1}}A_t - \Ahat_t^{T}\PILC_{t+1}B_t(R + \Bhat_t^{T}\PILC_{{t+1}}B_t)^{-1}\Bhat_t^{T}\PILC_{t+1}A_t \\
  &= Q + \Ahat_t^{T}\PILC_{t+1}(I + B_tR^{-1}\Bhat_t^{T}\PILC_{t+1})^{-1}A_t
\end{align*}
where we used the matrix inversion lemma.

Consider the difference,
\begin{align*}
  \POPT_{t} - \PILC_{t} &= A_t^{T}\POPT_{t+1}(I + B_tR^{-1}B_t^{T}\POPT_{t+1})^{-1}A_t - \Ahat_t^{T}\PILC_{t+1}(I + B_tR^{-1}\Bhat_t^{T}\PILC_{t+1})^{-1}A_t \\
                      &= A^{T}\POPT_{t+1}(I + B_tR^{-1}B_t^{T}\POPT_{t+1})^{-1}A_t - \Ahat_t^{T}\POPT_{t+1}(I + B_tR^{-1}\Bhat_t^{T}\POPT_{t+1})^{-1}A_t \\
  &+ \Ahat_t^{T}\left(\POPT_{t+1}(I + B_tR^{-1}\Bhat_t^{T}\POPT_{t+1})^{-1} - \PILC_{t+1}(I + B_tR^{-1}\Bhat_t^{T}\PILC_{t+1})^{-1}\right)A_t
\end{align*}

Here again we can use Lemma~\ref{lemma:mania-appendix-original} with
$S = B_tR^{-1}B_t^{T}$, $\Shat = B_tR^{-1}\Bhat_t^{T}$, $Q = \POPT_{t+1}$ and observing that
$||\Shat - S|| \leq ||B_t||||R^{-1}||\epsB$ to get
\begin{align*}
  ||\PILC_{t} - \POPT_{t}|| &\leq ||A_t||^{2}||\POPT_{t+1}||^{2}||B_t||||R^{-1}||\epsB + ||A_t||||\POPT_{t+1}||\epsA \\
  &+ ||A_t||||\Ahat_t||||\POPT_{t+1}(I + B_tR^{-1}\Bhat_t^{T}\POPT_{t+1})^{-1} - \PILC_{t+1}(I + B_tR^{-1}\Bhat_t^{T}\PILC_{t+1})^{-1}||
\end{align*}

Here again we use Lemma~\ref{lemma:stackexchange} to bound the second expression
giving us
\begin{align*}
  ||\PILC_{t} - \POPT_{t}|| &\leq ||A_t||^{2}||\POPT_{t+1}||^{2}||B_t||||R^{-1}||\epsB + ||A_t||||\POPT_{t+1}||\epsA \\
  &+ ||A_t||||\Ahat_t|| ||(I + B_tR^{-1}\Bhat_t^{T}\POPT_{t+1})^{-1}|| ||\PILC_{t+1} - \POPT_{t+1}|| ||(I + B_tR^{-1}\Bhat_t^{T}\PILC_{t+1})^{-1}||
\end{align*}

This can be rewritten as the final bound,
\begin{align}
  \label{eq:91}
  ||\POPT_{t} - \PILC_{t}|| &\leq ||A_t||^{2}||\POPT_{t+1}||^{2}||B_t||||R^{-1}||\epsB + ||A_t||||\POPT_{t+1}||\epsA \nonumber\\
  &+ (||A_t||^{2} + \epsA||A_t||) \kappa_{\POPT_{t+1}}\kappa_{\PILC_{t+1}} ||\POPT_{t+1} - \PILC_{t+1}||
\end{align}

The constant $c_{\POPT_{t+1}}$ can be derived very similarly as we
have done in the proof of Theorem~\ref{theorem:ce}. The example that
realizes the upper bound is given in Appendix~\ref{sec:scalar-example-that}.
\end{proof}

\section{Scalar Example that Realizes Upper Bounds}
\label{sec:scalar-example-that}

\subsection{General Formulation}
\label{sec:general-formulation}
Consider a $1$D linear dynamical system given by,
\begin{equation}
  \label{eq:10}
  x_t = ax_t + bu_t
\end{equation}
where $x_t, u_t, a, b \in \reals$. The cost function is given by,
\begin{equation}
  \label{eq:11}
  V_0(x_0) = \sum_{t=0}^{H-1} qx_t^2 + ru_t^2 + qx_H^2
\end{equation}
We are given access to an approximate model specified using $\ahat,
\bhat \in \reals$.

The optimal cost-to-go is specified using
\begin{align}
  \label{eq:12}
  &\popt_H = q \\
  &\popt_t = q + \frac{a^2\popt_{t+1}}{1 + b^2r^{-1}\popt_{t+1}} = q + \frac{a^2r\popt_{t+1}}{r + b^2\popt_{t+1}}
\end{align}

For \MM{}, the cost-to-go is specified using
\begin{align}
  \label{eq:13}
  &\pce_H = q \\
  &\pce_t = q + \frac{\ahat^2r\pce_{t+1}}{r + \bhat^2\pce_{t+1}}
\end{align}

For ILC, the cost-to-go is specified using
\begin{align}
  \label{eq:8}
  &\pilc_h = q \\
  &\pilc_t = q + \frac{a\ahat r\pilc_{t+1}}{r + b\bhat\pilc_{t+1}}
\end{align}

In the next two subsections, we will show that an example dynamical
system where $\bhat = 0$, i.e. the approximate model thinks that the
system is not controllable will realize the worst case upper bounds
for both \MM{} and ILC as presented in Theorems~\ref{theorem:ce}
and~\ref{theorem:ilc} respectively.

\subsection{Optimal Control with Misspecified Model}
\label{sec:cert-equiv-contr-1}

Consider the difference
\begin{align*}
  \popt_t - \pce_t &= \frac{a^2r\popt_{t+1}}{r + b^2\popt_{t+1}} -
                       \frac{\ahat^2r\pce_{t+1}}{r +
                       \bhat^2\pce_{t+1}} \\
  &= \left( \frac{a^2r\popt_{t+1}}{r + b^2\popt_{t+1}} -
    \frac{\ahat^2r\popt_{t+1}}{r + \bhat^2\popt_{t+1}} \right) +
    \left( \frac{\ahat^2r\popt_{t+1}}{r + \bhat^2\popt_{t+1}} - \frac{\ahat^2r\pce_{t+1}}{r +
                       \bhat^2\pce_{t+1}} \right)
\end{align*}
Let us look at each term separately. The first term can be simplified
as
\begin{equation}
  \label{eq:17}
  \left( \frac{a^2r\popt_{t+1}}{r + b^2\popt_{t+1}} -
    \frac{\ahat^2r\popt_{t+1}}{r + \bhat^2\popt_{t+1}} \right) =
                                                                 \frac{\popt_{t+1}(a^2-
                                                                 \ahat^2)}{1
                                                                 +
                                                                 \bhat^2r^{-1}\popt_{t+1}}
                                                                 +
                                                                 \frac{a^2r^{-1}(\bhat^2
                                                                 -
                                                                 b^2)(\popt_{t+1})^2}{(1
                                                                 +
                                                                 b^2r^{-1}\popt_{t+1})(1
                                                                 +
                                                                 \bhat^2r^{-1}\popt_{t+1})}                                                     
\end{equation}
Similarly, the second term can be simplified as
\begin{equation}
  \label{eq:9}
  \left( \frac{\ahat^2r\popt_{t+1}}{r + \bhat^2\popt_{t+1}} - \frac{\ahat^2r\pce_{t+1}}{r +
                       \bhat^2\pce_{t+1}} \right) =
                   \frac{\ahat^2(\popt_{t+1} - \pce_{t+1})}{(1 +
                     \bhat^2r^{-1}\popt_{t+1})(1 + \bhat^2r^{-1}\pce_{t+1})}
\end{equation}

Now, consider the example dynamical system where $a - \ahat = \epsa$,
$b - \bhat = \epsb$, and $\bhat = 0$. Our upper bound in
Theorem~\ref{theorem:ce} states that,
\begin{equation}
  \label{eq:16}
  |\popt_{t} - \pce_t| \leq a^2r^{-1}(\popt_{t+1})^2(2b\epsb +
  \epsb^2) + \popt_{t+1}(2a\epsa + \epsa^2) + (a +
  \epsa)^2|\popt_{t+1} - \pce_{t+1}|
\end{equation}
For the example system equation~\eqref{eq:17} simplifies to,
\begin{align*}
  \frac{\popt_{t+1}(a^2-
                                                                 \ahat^2)}{1
                                                                 +
                                                                 \bhat^2r^{-1}\popt_{t+1}}
                                                                 +
                                                                 \frac{a^2r^{-1}(\bhat^2
                                                                 -
                                                                 b^2)(\popt_{t+1})^2}{(1
                                                                 +
                                                                 b^2r^{-1}\popt_{t+1})(1
                                                                 +
                                                                 \bhat^2r^{-1}\popt_{t+1})}
  = \popt_{t+1}(2a\epsa + \epsa^2)
                                                                 +
                                                                 \frac{a^2r^{-1}(\popt_{t+1})^2(2b\epsb
  + \epsb^2)}{(1
                                                                 +
                                                                 b^2r^{-1}\popt_{t+1})}
\end{align*}
which matches the first two terms in the upper bound
(equation~\eqref{eq:16}) upto a constant. Now, let's look at how
equation~\eqref{eq:9} simplifies
\begin{align*}
  \frac{\ahat^2(\popt_{t+1} - \pce_{t+1})}{(1 +
                     \bhat^2r^{-1}\popt_{t+1})(1 +
  \bhat^2r^{-1}\pce_{t+1})} = (a + \epsa)^2(\popt_{t+1} - \pce_{t+1})
\end{align*}
which matches the last term in the upper bound
(equation~\eqref{eq:16}) exactly. Thus, we found an example where
$|\popt_t - \pce_t|$ matches the upper bound specified in
Theorem~\ref{theorem:ce} upto a constant.

\subsection{Iterative Learning Control}
\label{sec:iter-learn-contr-2}

Consider the difference
\begin{align*}
  \popt_t - \pilc_t &= \frac{a^2r\popt_{t+1}}{r + b^2\popt_{t+1}} -
                       \frac{a\ahat r\pilc_{t+1}}{r +
                      b\bhat\pilc_{t+1}} \\
  &= \left( \frac{a^2r\popt_{t+1}}{r + b^2\popt_{t+1}} -
    \frac{a\ahat r\popt_{t+1}}{r + b\bhat\popt_{t+1}} \right) + \left(
    \frac{a\ahat r\popt_{t+1}}{r + b\bhat\popt_{t+1}} - \frac{a\ahat r\pilc_{t+1}}{r +
                      b\bhat\pilc_{t+1}}\right)
\end{align*}
Once again let us look at each term separately. The first term can be
simplified as
\begin{equation}
  \label{eq:20}
  \left( \frac{a^2r\popt_{t+1}}{r + b^2\popt_{t+1}} -
    \frac{a\ahat r\popt_{t+1}}{r + b\bhat\popt_{t+1}} \right) =
  \frac{a\popt_{t+1}(a - \ahat)}{(1 + b\bhat r^{-1}\popt_{t+1})} +
  \frac{a^2br^{-1}(\popt_{t+1})^2(\bhat - b)}{(1 + b^2r^{-1}\popt_{t+1})(1
    + b\bhat r^{-1}\popt_{t+1})}
\end{equation}
Similarly, the second term can be simplified as
\begin{equation}
  \label{eq:18}
  \left(
    \frac{a\ahat r\popt_{t+1}}{r + b\bhat\popt_{t+1}} - \frac{a\ahat r\pilc_{t+1}}{r +
                      b\bhat\pilc_{t+1}}\right) =
                  \frac{a\ahat(\popt_{t+1} - \pilc_{t+1})}{(1 + b\bhat
                    r^{-1}\popt_{t+1})(1 + b\bhat r^{-1}\pilc_{t+1})}
\end{equation}
Similar to \MM{} in the previous section, consider the example dynamical
system where $a - \ahat = \epsa$, $b - \bhat = \epsb$ and $\bhat =
0$. Our upper bound in Theorem~\ref{theorem:ilc} states that
\begin{equation}
  \label{eq:19}
  |\popt_t - \pilc_t| \leq a^2(\popt_{t+1})^2br^{-1}\epsb +
  a\popt_{t+1}\epsa + a(a + \epsa)|\popt_{t+1} - \pilc_{t+1}|
\end{equation}
For the example dynamical system, equation~\eqref{eq:20} simplifies to
\begin{align*}
  \frac{a\popt_{t+1}(a - \ahat)}{(1 + b\bhat r^{-1}\popt_{t+1})} +
  \frac{a^2br^{-1}(\popt_{t+1})^2(\bhat - b)}{(1 + b^2r^{-1}\popt_{t+1})(1
    + b\bhat r^{-1}\popt_{t+1})} = a\popt_{t+1}\epsa +
  \frac{a^2(\popt_{t+1})^2br^{-1}\epsb}{(1 + b^2r^{-1}\popt_{t+1})}
\end{align*}
which matches the first two terms in the upper bound
(equation~\eqref{eq:19}) upto a constant. Now, let's look at how
equation~\eqref{eq:18} simplifies
\begin{align*}
  \frac{a\ahat(\popt_{t+1} - \pilc_{t+1})}{(1 + b\bhat
                    r^{-1}\popt_{t+1})(1 + b\bhat r^{-1}\pilc_{t+1})}
  = a(a + \epsa)(\popt_{t+1} - \pilc_{t+1})
\end{align*}
which matches the last term in the upper bound
(equation~\eqref{eq:19}) exactly. Thus, we found that the same example
also matches the upper bound specified in Theorem~\ref{theorem:ilc}
upto a constant.

\section{Experiment Details}
\label{sec:experiment-details}

\subsection{Linear Dynamical System with Approximate Model}
\label{sec:line-dynam-syst-1}

We use a horizon $H = 10$ and initial state $x_0 = \begin{bmatrix}0.1
  \\ 0.1 \end{bmatrix}$.

\subsection{Nonlinear Inverted Pendulum with Misspecified Mass}
\label{sec:nonl-invert-pend}

For the second experiment, we use the nonlinear dynamical system of an inverted
pendulum. The state space is specified by $x =
\begin{bmatrix}
  \theta \\
  \dot{\theta}
\end{bmatrix} \in \reals^{2}
$ where $\theta$ is the angle between the pendulum and the vertical axis. The
control input is $u = \tau \in \reals$ specifying the torque $\tau$ to be
applied at the base of the pendulum. The dynamics of the system are
given by the ODE,
%\begin{align*}
$\ddot{\theta} = \frac{\bar{\tau}}{m\ell^{2}} - \frac{g\sin(\theta)}{\ell}$
%\end{align*}
where $m$ is the mass of the pendulum, $\ell$ is the length of the pendulum, $g$
is the acceleration due to gravity, and
$\bar{\tau} = \max(\tau_{\min}, \min(\tau_{\max}, \tau))$ is the clipped torque
based on torque limits. We use $\ell = 1$m, $\tau_{\max} = 8$Nm,
$\tau_{\min} = -8$Nm, and $m = 1$kg.

We use a per time step cost function defined as $c(\theta, \tau) =
0.1\tau^{2} + \theta^{2}$ 
where $\theta \in [-\pi, \pi]$, an initial state $x_{0} =
\begin{bmatrix}
  \frac{\pi}{2} \\ 0.5
\end{bmatrix}
$, and a horizon $H = 20$. For all algorithms, we start with an
initial control sequence consisting of zero torques for the entire horizon.

\subsection{Nonlinear Planar Quadrotor Control in Wind}
\label{sec:nonl-plan-quadr-1}

In our final experiment, we compare \MM{} and \ILC{} on a planar quadrotor control
task in the presence of wind. The quadrotor is controlled using two propellers
that provide upward
thrusts $(u_{1}, u_{2})$ and allows movement in the $3$D planar space
described as
$(p_{x}, p_{y}, \theta)$ where $p_{x}, p_{y}$ are X, Y positions, and
$\theta$ is the yaw of the quadrotor. The dynamics of the planar quadrotor
is specified
using a state vector $x \in \reals^{6}$, control input $u \in \reals^{2}$ as
\begin{align*}
  x =
  \begin{bmatrix}
    p_{x} \\ p_{y} \\ \theta \\ \dot{p}_{x} \\ \dot{p}_{y} \\ \dot{\theta}
  \end{bmatrix}, u =
  \begin{bmatrix}
    u_{1} \\ u_{2}
  \end{bmatrix},
  \dot{x} =
  \begin{bmatrix}
    \dot{p}_{x} \\ \dot{p}_{y} \\ \dot{\theta} \\ \frac{1}{m}(u_{1} + u_{2})\sin(\theta) \\ \frac{1}{m}(u_{1} + u_{2})\cos(\theta) - g \\ \frac{\ell}{2J}(u_{2} - u_{1})
  \end{bmatrix}
\end{align*}
where $m$ is the mass of the quadrotor, $\ell$ is the distance between the
propellers, $g$ is acceleration due to gravity, and $J$ is the moment of
inertia of the quadrotor. We use $m = 1$kg, $\ell = 0.3$m, and
$J = 0.2m\ell^{2}$. The objective of the task is to move the quadrotor from an initial
state $x_0$
at $(-3, 1)$ with zero velocity to a final state $x_f$ at $(3, 1)$ with zero
velocity. This is achieved using the per time-step cost function
$c(x, u) = (x - x_{f})^{T}Q(x - x_{f}) + (u - u_{h})^{T}R(u - u_{h})$
where $u_{h} = [\frac{1}{2}mg, \frac{1}{2}mg]$ are the hover controls. We use a
horizon of $H = 60$ with a step size of $0.025$ for RK4 integration.

\end{document}